\documentclass[11pt]{article}

\usepackage{acl}

\usepackage{times}
\usepackage{latexsym}
\usepackage[T1]{fontenc}
\usepackage[utf8]{inputenc}
\usepackage{microtype}
\usepackage{inconsolata}
\usepackage{graphicx}
\usepackage{cuted}
\usepackage{caption}
\usepackage[percent]{overpic}

\usepackage{booktabs}
\usepackage{amsfonts}
\usepackage{amsmath}
\usepackage{amssymb}
\usepackage{algorithm}
\usepackage{algorithmic}
\usepackage{multirow}
\usepackage{array}
\usepackage{enumitem}
\usepackage{subcaption}
\usepackage{float}
\usepackage{amsthm}
\usepackage{placeins}
\usepackage[table]{xcolor}
\usepackage{colortbl}
\usepackage{soul}
\usepackage{nicefrac}

\newtheorem{proposition}{Proposition}
\newtheorem{remark}{Remark}

\title{TOPS: First-Principles Visual Token Pruning via Constructing Token Optimal Preservation Sets for Efficient MLLM Inference}

\vspace{22pt}
\author{
{\small\textbf{Tinghao Wang$^{1,2,*}$, Yichen Guo$^{1,3,*}$, Rui Huang$^{2,*,\dagger}$, Zheng Lu$^{2}$, Qizhe Zhang$^{1}$,}} \\
{\small\textbf{Chenxi Li$^{4}$, Yuan Zhang$^{1}$, Jiajun Cao$^{1}$, Zhirong Shen$^{2}$, Yaosong Du$^{2}$,}} \\
{\small\textbf{Guangyan Gan$^{3}$, Wenya Wang$^{3}$, Lin William Cong$^{3}$, Shanghang Zhang$^{1,\ddagger}$}} \\
\vspace{0.3em}
{\small $^{1}$State Key Laboratory of Multimedia Information Processing,} \\[-0.4em]
{\small School of Computer Science, Peking University} \\
{\small $^{2}$University of Electronic Science and Technology of China} \\
{\small $^{3}$Nanyang Technological University, $^{4}$Beijing Academy of Artificial Intelligence (BAAI)}
}
\makeatletter
\newcommand{\TopWideTable}[1]{%
\begin{strip}
\centering
\def\@captype{table}%
#1
\end{strip}
}
\makeatother
\makeatletter
\AtBeginDocument{
\let\@oldmaketitle\@maketitle
\renewcommand{\@maketitle}{%
  \@oldmaketitle
  \vspace{-10pt}

  \noindent
  \begin{minipage}{\textwidth}
  \centering

  \noindent
  \begin{minipage}{0.98\textwidth}
  \begin{minipage}[t]{1.8em}
  \vspace{2mm}
  (a)
  \end{minipage}%
  \hspace{0.2em}%
  \begin{minipage}[t]{0.94\textwidth}
  \vspace{0pt}
  \centering
  \includegraphics[width=\linewidth]{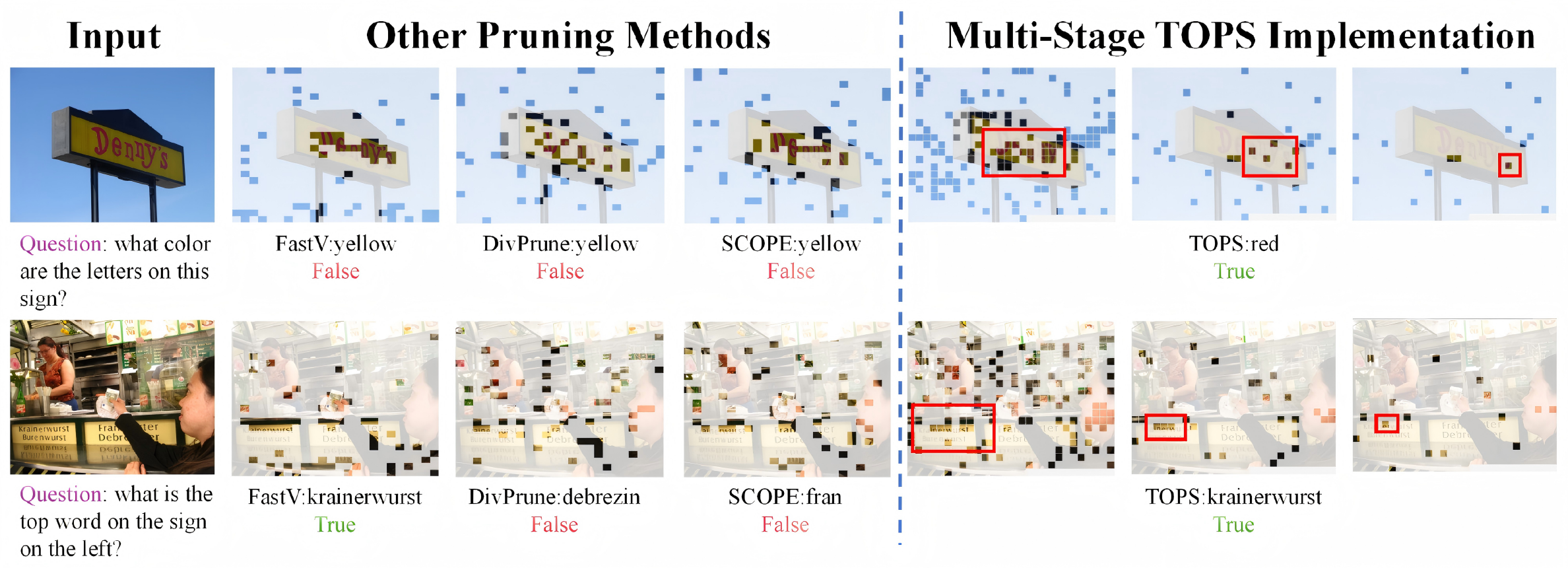}
  \end{minipage}
  \end{minipage}
  \vspace{-3mm}

  \noindent
  \begin{minipage}{0.98\textwidth}
  \begin{minipage}[t]{1.8em}
  \vspace{2mm}
  (b)
  \end{minipage}%
  \hspace{0.2em}%
  \begin{minipage}[t]{0.94\textwidth}
  \vspace{0pt}
  \centering
  \includegraphics[width=\linewidth]{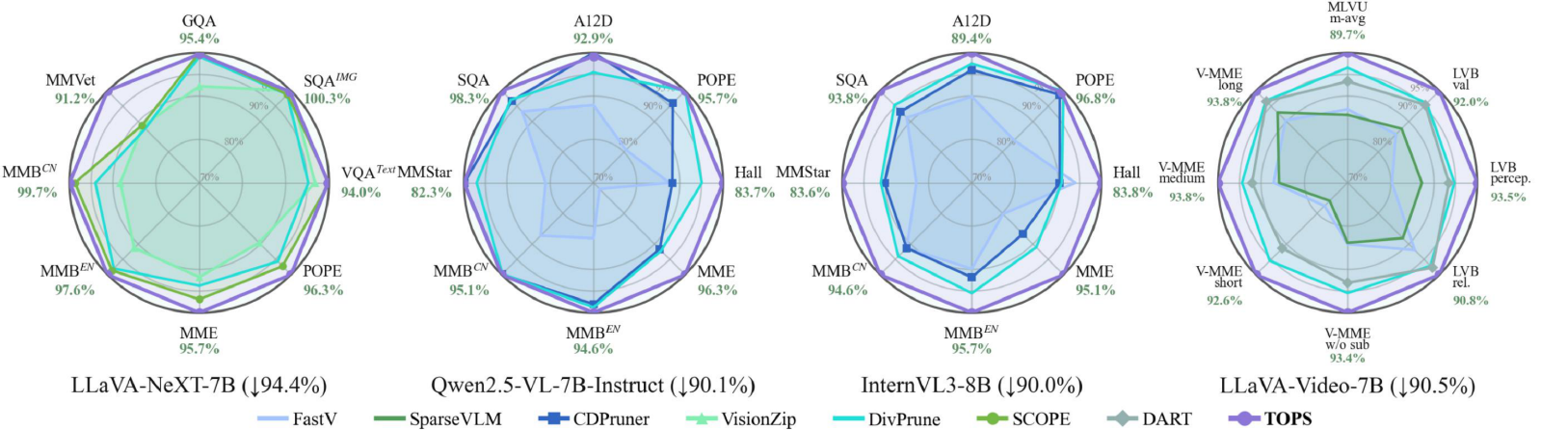}
  \end{minipage}
  \end{minipage}

  \vspace{-7pt}
  \captionof{figure}{
  (a) \textbf{Qualitative comparison of pruning methods.}
  On detail-sensitive VQA questions, single-criterion pruning methods, including attention-based, diversity-based, and coverage-based methods, often fail to answer, whereas the multi-stage TOPS module helps model preserve key visual evidence and produce the correct answers.
  (b) \textbf{Performance comparison on four mainstream MLLMs.}
  We validate TOPS across four architectures. TOPS consistently covers the largest area, demonstrating superior performance across all models and benchmarks.
  }
  \label{fig:intro_overview}

  \end{minipage}
  \vspace{3pt}
}
}
\makeatother



\begin{document}
\maketitle
\makeatletter
\begingroup
\renewcommand{\thefootnote}{}
\renewcommand{\@makefntext}[1]{\noindent#1}
\footnotetext{\textsuperscript{*}Equal contribution. \textsuperscript{$\dagger$}Project leader. \textsuperscript{$\ddagger$}Corresponding author.}
\endgroup
\makeatother

\vspace{10pt}
\begin{abstract}
\vspace{-8pt}
Multimodal large language models (MLLMs) have achieved strong multimodal reasoning capabilities, but their efficiency is limited by the large number of visual tokens, which introduces substantial computational overhead. Visual token pruning offers a natural solution, yet existing methods are imperfect: attention-based criteria tend to retain redundant tokens, while diversity-based criteria are often agnostic to user instructions. Even methods that combine multiple criteria still lack a principled formulation of the intrinsic objective of token pruning. In this paper, we revisit visual token pruning from a first-principles perspective and formulate it as constructing Token Optimal Preservation Sets. Through a top-down information-theoretic analysis, we identify three fundamental principles for effective token selection: Task Relevance, Information Coverage, and Semantic Diversity. Based on these principles, we propose \textbf{TOPS}, a training-free and model-agnostic pruning module that can be applied to various MLLMs. Extensive experiments on 7 MLLM backbones and 14 benchmarks demonstrate that TOPS outperforms prior methods under diverse pruning settings. Notably, on LLaVA-NeXT, TOPS removes 77.8\% of visual tokens while preserving 100.0\% and 100.6\% performance on its 7B and 13B models, respectively, suggesting that pruning redundant visual tokens can sometimes mitigate hallucination and inspire future lightweight MLLM design.
\end{abstract}
\vspace{-9pt}
\section{Introduction}
\vspace{-8pt}

Large language models (LLMs)~\cite{achiam2023gpt, hurst2024gpt,singh2025openai,yang2025qwen3,team2026qwen3,touvron2023llama,touvron2023llama2,grattafiori2024llama, team2023gemini, team2024gemini, comanici2025gemini,team2026kimi,team2025kimi} have achieved remarkable success in language understanding and reasoning. Building on these capabilities, multimodal large language models (MLLMs)~\cite{liu2023visual,liu2024llavanext,li2024llava,zhang2024llava,Qwen2.5-VL,bai2025qwen3,chen2024internvl,zhu2025internvl3} have made rapid progress in multimodal understanding. However, their efficiency is limited by numerous visual tokens, which are processed through all transformer layers~\cite{chen2024image,zhang2024sparsevlm,zhang2025vispruner,zhang2025vscan}. Since self-attention scales quadratically with sequence length, these tokens introduce substantial computational and memory overhead, especially for multi-image and high-resolution inputs. Therefore, reducing visual tokens while preserving performance is a critical challenge.

Previous methods~\cite{chen2024image, yang2025visionzip,wang2026entropyprune,cao2026fastdrivevla} have attempted to reduce visual tokens to lower the inference cost of MLLMs. Existing pruning approaches can be broadly categorized into three types. Attention-based methods~\cite{zhang2025vispruner, xing2024pyramiddrop, zhang2024sparsevlm} identify token importance via cross-modal attention or \texttt{cls} token attention, but often retain highly similar tokens, resulting in redundancy. Diversity-based methods~\cite{alvar2025divprune, wen2025stop} encourage semantic dispersion, yet are typically agnostic to user instructions and may discard task-critical evidence. Other methods combine multiple criteria or incorporate coverage-based objectives~\cite{song2025less, shang2025llava, zhang2025vispruner, baek2026agilepruner, zhang2025towards} to model the representativeness of selected subsets. However, despite these advances, existing methods mostly treat token pruning as a scoring problem and rank tokens based on heuristic criteria, without a principled justification for why such criteria are appropriate or sufficient for constructing an optimal token subset. Fundamentally, these approaches do not start from the intrinsic objective of pruning, but rely on heuristic scoring schemes.

To address these challenges, we move beyond conventional heuristic scoring schemes~\cite{rao2021dynamicvit,liang2022not,bolya2022token,chen2024image,shang2025llava} and revisit token pruning from a first-principles perspective. Instead of designing new scoring heuristics, we rethink the core objective of token selection, conduct a top-down analysis using information theory, and identify three fundamental principles for effective pruning---\textbf{Task Relevance}, \textbf{Information Coverage}, and \textbf{Semantic Diversity}. Based on these principles, we formulate token pruning as an optimal subset selection problem and propose TOPS, which constructs a compact yet sufficient token subset that satisfies the proposed properties and can be applied at any pruning point during MLLM inference. We further implement TOPS as a two-stage pipeline for fine-grained token reduction, where Stage I removes coarse visual redundancy and Stage II performs text-aware refinement. As shown in Figure.~\ref{fig:intro_overview}(a), TOPS better preserves task-critical visual evidence under aggressive pruning.

As a simple yet effective solution, TOPS does not depend on any specific visual encoder or language model, which means it can be readily implemented across any token-based MLLM. Extensive experiments across various MLLMs demonstrate the effectiveness and efficiency of TOPS, surpassing existing methods (Figure.~\ref{fig:intro_overview}(b)). For instance, on LLaVA-v1.5-7B~\cite{liu2023visual} and LLaVA-NeXT-7B~\cite{liu2024llavanext}, TOPS removes nearly 90\% of visual tokens while retaining 97.1\% and 99.1\% of the original performance.

Overall, our main contributions are as follows:
\begin{itemize}[leftmargin=1.2em, itemsep=2pt, topsep=2pt, parsep=0pt, partopsep=0pt]
    \item We revisit token pruning from a first-principles perspective, conduct a top-down analysis using information theory and identify three criteria that govern effective token selection.
    \item We propose \textbf{TOPS}, a training-free and model-agnostic pruning module, considering the fundamental criteria and dynamically constructing token optimal preservation sets in MLLMs.
    \item We conduct extensive experiments across various MLLMs and benchmarks, demonstrating TOPS consistently achieves state-of-the-art performance across different reduction ratios.
\end{itemize}
\vspace{-6pt}
\section{Related Work}
\begin{figure*}[t]
    \centering
    \includegraphics[width=\linewidth]{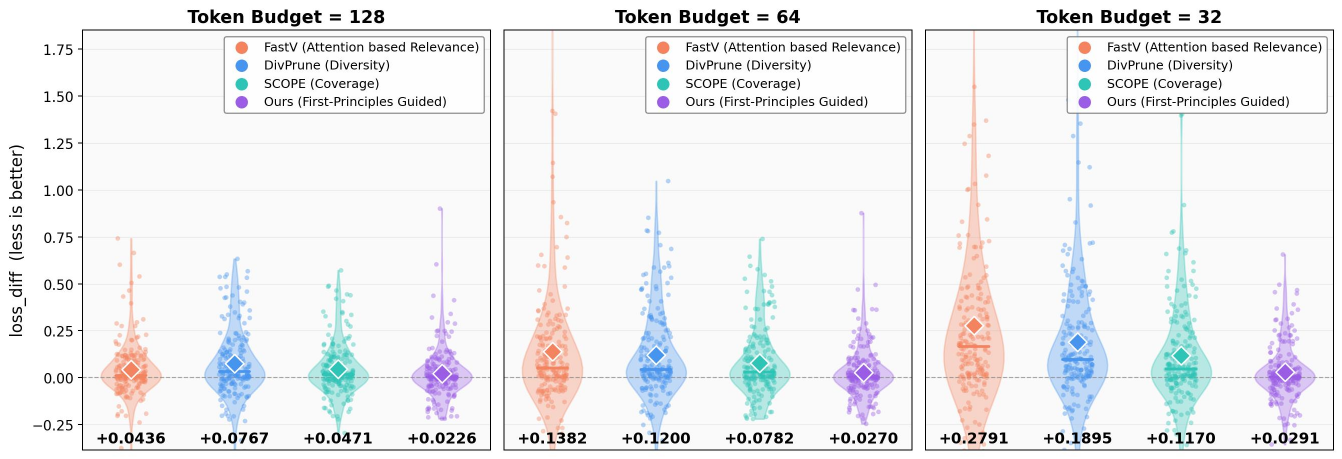}
    \caption{Logit fidelity of pruning methods across token budgets (128/64/32) on 200 MME samples. We report $\Delta\mathcal{L} = \mathcal{L}_{\text{pruned}} - \mathcal{L}_{\text{vanilla}}$, where lower values indicate smaller output distortion. TOPS consistently achieves the lowest loss increase, demonstrating stronger fidelity under aggressive pruning.}
    \label{fig:loss_diff_comparison_mme}
    \vspace{-4mm}
\end{figure*}

\vspace{-6pt}
\textbf{Multimodal large language models.}
Multimodal large language models (MLLMs)~\cite{liu2024improved, bai2023qwenvl, chen2024internvl, li2024llava, team2023gemini, hurst2024gpt, team2024gemini} extend large language models (LLMs)~\cite{brown2020language, achiam2023gpt, bai2023qwen, bai2025qwen2, touvron2023llama, peng2023instruction, bi2024deepseek} to multimodal understanding by encoding visual inputs as token sequences and processing them together with text tokens. However, visual tokenization introduces substantial computational overhead, since visual tokens are often far more numerous than text tokens and are propagated through all LLM layers. For example, LLaVA-1.5~\cite{liu2024improved} represents a $336\times336$ image with $576$ tokens, while LLaVA-NeXT~\cite{liu2024llavanext} can produce up to $2,880$ tokens for high-resolution inputs. The problem becomes more severe in video understanding~\cite{lin2024video, kondratyuk2023videopoet}, where long frame sequences lead to long visual token sequences and expensive inference. Therefore, effective token reduction is essential for scalable MLLM inference.
\vspace{3pt}

\textbf{Visual token reduction.}
Visual token reduction aims to improve MLLM efficiency by removing redundant visual tokens~\cite{jin2025efficient}. Existing training-free methods can be broadly categorized by their selection criteria. Attention-based methods estimate token importance from attention signals~\cite{chen2024image, xing2024pyramiddrop, zhang2024sparsevlm, zhangsparsevlm+, yang2025visionzip}, such as CLS-to-patch attention in VisionZip~\cite{yang2025visionzip} or text-guided cross-modal attention in FastV~\cite{chen2024image}. While effective for retaining salient tokens, they often preserve redundant tokens with similar semantics. Diversity-based methods~\cite{alvar2025divprune, wen2025stop} reduce redundancy by encouraging semantic dispersion, but are usually instruction-agnostic and may discard task-critical evidence. More recent methods combine importance, diversity, saliency, coverage, or progressive pruning strategies~\cite{song2025less, shang2025llava, zhang2025vispruner, zhang2025cdpruner, baek2026agilepruner, zhang2025towards, tan2026idpruner, wang2026entropyprune, deng2025scope, liu2024multi, zhang2025vscan}. Despite improved performance, they still rely on heuristic scoring schemes and lack a principled formulation of the intrinsic objective of token pruning.
\section{Motivation}
\label{sec:motivation}
\vspace{-1mm}
\begin{figure*}[t]
    \centering
    \includegraphics[width=\linewidth,trim=0 20pt 0 0,clip]{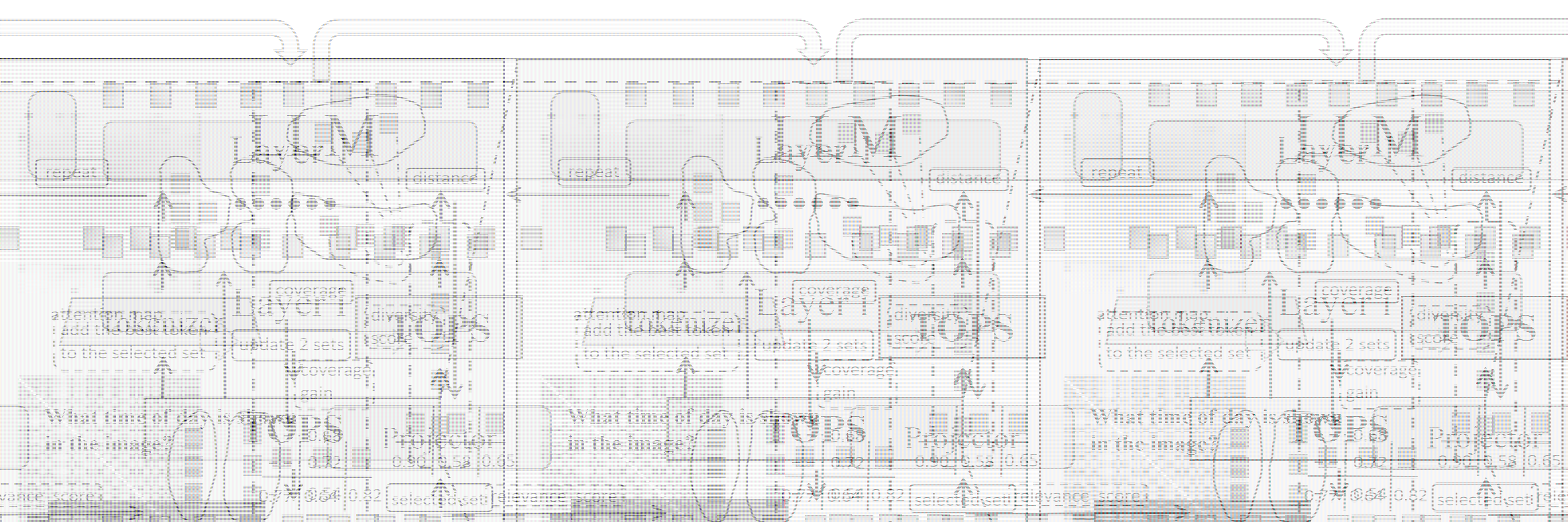}
    \vspace{-4mm}
    \caption{
        \textbf{Overview of TOPS.} Left: TOPS is a plug-and-play pruning module that can be applied at multiple stages during MLLM inference. Right: at each pruning point, TOPS constructs the optimal token preservation set by greedily selecting tokens that jointly maximize task relevance, information coverage, and semantic diversity---the three criteria derived from our first-principles formulation.
    }
    \label{fig:overview}
    \vspace{-4mm}
\end{figure*}
Modern MLLMs typically consist of a vision encoder $f_v$, a multimodal projector $g$, and a language model $f_{\phi}$. Given an image $X_v$ and a textual query $Q$, the model produces visual tokens $V = g(f_v(X_v)) \in \mathbb{R}^{n\times d}$. Since $n$ is typically much larger than the number of text tokens, visual token pruning seeks a subset 
$S^{*}=\arg\min_{S\subset V,\,|S|=K} D\!\left(f_{\phi}(S,Q)\,\|\,f_{\phi}(V,Q)\right)$,
where $D(\cdot \| \cdot)$ measures the output divergence between the pruned and full models.

\vspace{-2mm}
\subsection{First-Principles Pruning Formulation}
\label{sec:motivation_formulation}
\vspace{-1mm}

Stepping beyond the conventional visual token pruning paradigm, we revisit the problem from a first-principles perspective. Given the full visual token set $V$ and the textual query $Q$, the goal of token pruning is to retain a subset $S\subseteq V$ such that reasoning based on $(S,Q)$ remains consistent with that based on $(V,Q)$. We formalize this as an information-theoretic objective:
\begingroup
\setlength{\abovedisplayskip}{2pt}
\setlength{\belowdisplayskip}{2pt}
\setlength{\abovedisplayshortskip}{0pt}
\setlength{\belowdisplayshortskip}{2pt}
\begin{equation}
\mathop{\max}\nolimits_{\scriptstyle S\subseteq V,\,|S|\le K}
I(S;V,Q)
\label{eq:method_mi_obj}
\end{equation}
\endgroup
where $I(\cdot;\cdot)$ denotes mutual information. This defines the first-principles of visual token pruning: the optimal subset is one that maximally preserves information about both $V$ and $Q$.

\vspace{-2mm}
\subsection{Decomposition of the First-Principles}
\label{sec:motivation_decomposition}
\vspace{-1mm}

By looking inside the mutual information objective, we can decompose it via the chain rule:
\begingroup
\setlength{\abovedisplayskip}{2pt}
\setlength{\belowdisplayskip}{2pt}
\setlength{\abovedisplayshortskip}{0pt}
\setlength{\belowdisplayshortskip}{2pt}
\begin{align}
I(S;V,Q)
&=
\underbrace{I(S;Q)}_{\text{task relevance}}
+
\underbrace{I(S;V\mid Q)}_{\text{information coverage}}
\label{eq:method_chain}
\\[-0.4em]
&\mkern-18mu=
I(S;Q)+H(V\mid Q)-H(V\mid S,Q).
\nonumber
\end{align}
\endgroup
The two terms reflect two key properties of an optimal subset: \textbf{Task Relevance} $I(S;Q)$ measures how informative $S$ is for the query, while \textbf{Information Coverage} $I(S;V\mid Q)$ encourages $S$ to preserve sufficient information about the original visual set $V$ by minimizing $H(V\mid S,Q)$. Let $S_k=\{t_1,\ldots,t_k\}$ be the subset selected after $k$ steps. The total mutual information can be decomposed into per-token contributions:
\begingroup
\setlength{\abovedisplayskip}{2pt}
\setlength{\belowdisplayskip}{2pt}
\setlength{\abovedisplayshortskip}{0pt}
\setlength{\belowdisplayshortskip}{2pt}
\begin{equation}
I(S_k;V,Q)=\sum_{i=1}^{k} I(t_i;V,Q\mid S_{i-1}).
\label{eq:method_incremental}
\end{equation}
\endgroup
This means each token's value depends on what has already been selected. Specifically, the marginal contribution of adding token $t_i$ to the current set $S_{i-1}$ is the conditional mutual information:
\begingroup
\setlength{\abovedisplayskip}{2pt}
\setlength{\belowdisplayskip}{2pt}
\setlength{\abovedisplayshortskip}{0pt}
\setlength{\belowdisplayshortskip}{2pt}
\begin{equation}
\begin{aligned}
\Delta_i 
&=
I(S_i;V,Q)-I(S_{i-1};V,Q)
 \\
&=
I(t_i;V,Q\mid S_{i-1}).
\end{aligned}
\end{equation}
\endgroup
Furthermore, this per-step gain admits a natural upper bound governed by the conditional entropy of the candidate token:
\begingroup
\setlength{\abovedisplayskip}{2pt}
\setlength{\belowdisplayskip}{2pt}
\setlength{\abovedisplayshortskip}{0pt}
\setlength{\belowdisplayshortskip}{2pt}
\begin{equation}
0\le I(t_i;V,Q\mid S_{i-1})\le H(t_i\mid S_{i-1})\le H(t_i).
\end{equation}
\endgroup
The upper bound $H(t_i\mid S_{i-1})$ is tight when $t_i$ carries novel information beyond the current set, and diminishes as $t_i$ becomes redundant with already selected tokens. This reveals that, under a fixed budget, maximizing mutual information favors candidates that are semantically distinct from existing selections, yielding \textbf{Semantic Diversity} as a necessary condition for efficient and non-redundant subset construction.
\vspace{-2mm}

\begin{table*}[t]
    \centering
    \centering
\setlength{\tabcolsep}{2pt}
\caption{Comparison of pruning methods across LLaVA series. All numbers report \textbf{Rel.} (\%), the ratio of pruned model accuracy to baseline. \colorbox[rgb]{1.0,0.88,0.88}{Red}: attention-based. \colorbox[rgb]{0.88,0.96,0.88}{Green}: attention\&diversity. \colorbox[rgb]{0.88,0.93,1.0}{Blue}: diversity-based. \colorbox[rgb]{0.88,0.97,0.97}{Cyan}: coverage-based. \colorbox[rgb]{0.93,0.88,1.0}{Purple}: ours. ``--'': not available. Detailed per-benchmark results are provided in Appendix~\ref{sec:appendix_13b}.}
\vspace{-2mm}
\label{tab:llava_series_summary}
\resizebox{\textwidth}{!}{%
\begin{tabular}{l|ccc|ccc|ccc|ccc}
\toprule
\multirow{2}{*}{\textbf{Method}}
  & \multicolumn{3}{c|}{\textbf{LLaVA-1.5-7B} {\small(2023)}}
  & \multicolumn{3}{c|}{\textbf{LLaVA-1.5-13B} {\small(2023)}}
  & \multicolumn{3}{c|}{\textbf{LLaVA-NeXT-7B} {\small(2024a)}}
  & \multicolumn{3}{c}{\textbf{LLaVA-NeXT-13B} {\small(2024a)}} \\
  & \multicolumn{3}{c|}{576 tokens}
  & \multicolumn{3}{c|}{576 tokens}
  & \multicolumn{3}{c|}{Upper(Up.) 2880 tokens}
  & \multicolumn{3}{c}{Upper(Up.) 2880 tokens} \\
\cmidrule(lr){2-4}\cmidrule(lr){5-7}\cmidrule(lr){8-10}\cmidrule(lr){11-13}
Compress Ratio
  & $\downarrow$77.8\% & $\downarrow$88.9\% & $\downarrow$94.4\%
  & $\downarrow$77.8\% & $\downarrow$88.9\% & $\downarrow$94.4\%
  & $\downarrow$77.8\% & $\downarrow$88.9\% & $\downarrow$94.4\%
  & $\downarrow$77.8\% & $\downarrow$88.9\% & $\downarrow$94.4\% \\
Remain Token
  & 128       & 64        & 32
  & 128       & 64        & 32
  & Up.\ 640  & Up.\ 320  & Up.\ 160
  & Up.\ 640  & Up.\ 320  & Up.\ 160 \\
\midrule
\rowcolor[rgb]{1.0,0.88,0.88}
SparseVLM (ICML25) & 96.0\% & 86.2\% & --      & 98.2\% & 93.0\% & --      & 98.3\%  & 93.2\%  & --      & 99.7\%  & 96.4\%  & --      \\
\rowcolor[rgb]{0.88,0.96,0.88}
VisionZip (CVPR25) & 96.8\% & 93.0\% & 86.8\%  & 96.9\% & 93.2\% & 86.5\%  & 99.4\%  & 95.3\%  & 89.3\%  & 99.9\%  & 96.4\%  & 91.8\%  \\
\rowcolor[rgb]{0.88,0.93,1.0}
DivPrune (CVPR25)  & 96.7\% & 93.7\% & 90.2\%  & 96.8\% & 94.2\% & 90.5\%  & 98.4\%  & 96.0\%  & 92.4\%  & 98.1\%  & 96.3\%  & 93.9\%  \\
\rowcolor[rgb]{0.88,0.97,0.97}
SCOPE (NeurIPS25)  & 97.8\% & 96.0\% & 93.5\%  & 97.7\% & 96.4\% & 93.3\%  & 99.8\%  & 97.8\%  & 94.4\%  & 99.4\%  & 98.2\%  & 95.8\%  \\
\rowcolor[rgb]{0.93,0.88,1.0}
\textbf{TOPS (Ours)} & \textbf{98.3\%} & \textbf{97.1\%} & \textbf{94.6\%} & \textbf{98.9\%} & \textbf{97.3\%} & \textbf{94.7\%} & \textbf{100.0\%} & \textbf{99.1\%} & \textbf{96.4\%} & \textbf{100.6\%} & \textbf{99.1\%} & \textbf{96.6\%} \\
\bottomrule
\end{tabular}
}

    \vspace{-3mm}
\end{table*}

\vspace{-1mm}
\subsection{Principle Instantiation and Analysis}
\vspace{-1mm}
\label{sec:principle_analysis}
We empirically evaluate three principles using representative methods: FastV~\cite{chen2024image} for task relevance, DivPrune~\cite{alvar2025divprune} for semantic diversity, and SCOPE~\cite{deng2025scope} for information coverage. We measure the inference loss increase over the unpruned model to quantify pruning degradation. As shown in Figure~\ref{fig:loss_diff_comparison_mme}, relevance-based pruning performs better at low pruning ratios, while diversity- and coverage-based methods become more effective under aggressive pruning. This suggests that different principles dominate under different budgets, and combining them in TOPS yields the best overall performance. Additional results are reported in Table~\ref{tab:ablation_rdc}.

\section{Method}
\label{sec:method}
\vspace{-2mm}

\subsection{Token Optimal Preservation Set}
\label{sec:method_tops}
\vspace{-2mm}

Utilizing the first-principles derived in Section~\ref{sec:motivation_decomposition}, we design the TOPS construction procedure. At each pruning point, we dynamically select a set of text raters $T_r$ and precompute a pairwise similarity matrix $\mathbf{F}$ over the visual tokens. 
For each candidate token $i$, given the current selected subset $S$, we denote the remaining visual tokens as $U=V\setminus S$ and compute its task relevance $r_i = \frac{1}{|T_r|}\sum_{t\in T_r}\mathrm{Attn}(t\!\rightarrow\!i)$ via text-rater attention, and its information coverage $c_i(U)=\sum_{j\in U}\max\!\left(0,\mathrm{sim}(h_i,h_j)-\max_{k\in S}\mathrm{sim}(h_j,h_k)\right)$. After min-max normalization, denoted by $\tilde{r}_i$ and $\tilde{c}_i(U)$, these two scores are combined into an information preservation score:
\begingroup
\setlength{\abovedisplayskip}{2pt}
\setlength{\belowdisplayskip}{2pt}
\setlength{\abovedisplayshortskip}{0pt}
\setlength{\belowdisplayshortskip}{2pt}
\begin{equation}
\widetilde{\mathrm{info}}_i
= \tilde{r}_i + \lambda\,\tilde{c}_i(U).
\label{eq:info_score}
\end{equation}
\endgroup
Following the diversity principle derived before, we further incorporate a semantic diversity score $d_i(S)=1-\max_{j\in S}\mathrm{sim}(h_i,h_j)$ to select. The subset is then expanded greedily as:
\begingroup
\setlength{\abovedisplayskip}{2pt}
\setlength{\belowdisplayskip}{2pt}
\setlength{\abovedisplayshortskip}{0pt}
\setlength{\belowdisplayshortskip}{2pt}
\begin{equation}
\begin{aligned}
S_{t+1}
&= S_t \cup \left\{
\arg\max_{i\in V\setminus S_t}
\left(
\widetilde{\mathrm{info}}_i
+\alpha\,\tilde{d}_i(S_t)
\right)
\right\}.
\end{aligned}
\label{eq:method_tops_score}
\end{equation}
\endgroup
where $\alpha,\lambda$ are balance factors. We expand the subset until it reaches the target size $K$. The construction is initialized as $S_1=\{\arg\max_i r_i\}$. Since coverage and diversity share the same $\max_{k\in S}\mathrm{sim}(\cdot,h_k)$ term, it is incrementally maintained, introducing negligible overhead. The complete algorithm is in Algorithm~\ref{alg:stage2} in Appendix.

\subsection{Multi-Stage TOPS Implementation}
\label{sec:method_progressive}

TOPS can be applied at multiple pruning points during MLLM inference. To fully exploit its flexibility, we implement a multi-stage pipeline for fine-grained pruning, as illustrated in Figure.~\ref{fig:overview}.

\textbf{Stage~I} applies TOPS after multimodal projector, before visual tokens enter LLM. Let $V^{(0)}=g(f_v(X_v))$ denote the projected visual token set. Since text-rater attention is unavailable at this stage, we use CLS attention to replace it:
\begingroup
\setlength{\abovedisplayskip}{2pt}
\setlength{\belowdisplayskip}{2pt}
\setlength{\abovedisplayshortskip}{0pt}
\setlength{\belowdisplayshortskip}{2pt}
\begin{equation}
\begin{aligned}
V^{(1)}
&= \mathrm{TOPS}\!\left(V^{(0)},\; r^{\mathrm{cls}}\right).
\end{aligned}
\label{eq:stage1}
\end{equation}
\endgroup
where $r^{\mathrm{cls}}$ denotes CLS-based relevance scores. This coarse reduction removes clearly redundant tokens before LLM processing.

\textbf{Stage~II} applies TOPS inside the LLM at a set of designated layers $\mathcal{P}=\{p_1,\ldots,p_L\}$. At each layer $p_l \in \mathcal{P}$, for $l=1,\ldots,L$, we perform:
\begingroup
\setlength{\abovedisplayskip}{2pt}
\setlength{\belowdisplayskip}{2pt}
\setlength{\abovedisplayshortskip}{0pt}
\setlength{\belowdisplayshortskip}{2pt}
\begin{equation}
\begin{aligned}
V^{(l+1)}
&= \mathrm{TOPS}\!\left(V^{(l)},\; r^{(p_l)}\right).
\end{aligned}
\label{eq:stage2}
\end{equation}
\endgroup
where $r^{(p_l)}$ denotes text-rater relevance at layer $p_l$, enabling TOPS to leverage LLM's text-to-visual attention at deeper layers for useful token selection.

\section{Experiments}

\begin{table*}[t]
    \centering
    \centering
\setlength{\tabcolsep}{2pt}
\caption{Performance comparison of different pruning methods on advanced VLM architectures across 8 benchmarks. \textbf{Acc.} denotes the average percentage of baseline performance maintained. \colorbox[rgb]{1.0,0.88,0.88}{Red}: attention-based. \colorbox[rgb]{0.88,0.96,0.88}{Green}: attention\&diversity. \colorbox[rgb]{0.88,0.93,1.0}{Blue}: diversity-based. \colorbox[rgb]{0.93,0.88,1.0}{Purple}: ours.}
\label{tab:pruning_comparison_advanced}
\vspace{-2mm}
\resizebox{\textwidth}{!}{%
\begin{tabular}{l|cccccccc|cc}
\toprule
\textbf{Method} & \textbf{AI2D} & \textbf{POPE} & \textbf{Hall} & \textbf{MME} & \textbf{MMB}$^{\text{EN}}$ & \textbf{MMB}$^{\text{CN}}$ & \textbf{MMStar} & \textbf{SQA} & \textbf{Acc.} & \textbf{Rel.} \\
\midrule
\multicolumn{11}{c}{\textit{Qwen2.5-VL-7B-Instruct — Upper Bound, All 1296 Tokens (100\%)}} \\
\midrule
Baseline & 84.9 & 87.7 & 55.9 & 2301.8 & 84.8 & 82.9 & 65.5 & 86.8 & 73.1 & 100.0\% \\
\midrule
\multicolumn{11}{c}{\textit{Retain 256 Tokens ($\downarrow$ 80.2\%)}} \\
\midrule
\rowcolor[rgb]{1.0,0.88,0.88}
FastV (ECCV24) & 78.4 & 83.0 & \underline{49.1} & 2169.3 & 80.5 & 78.8 & 55.5 & 83.6 & 67.8 & 92.7\% \\
\rowcolor[rgb]{0.88,0.96,0.88}
CDPruner (NeurIPS25) & \textbf{82.2} & 83.5 & 45.3 & \underline{2231.9} & \underline{81.5} & 80.1 & 57.7 & 84.1 & \underline{68.9} & \underline{94.3\%} \\
\rowcolor[rgb]{0.88,0.93,1.0}
DivPrune (CVPR25) & 81.2 & \underline{85.3} & 46.6 & 2167.3 & \textbf{81.8} & \textbf{80.9} & \underline{57.9} & \underline{84.8} & 68.8 & 94.1\% \\
\rowcolor[rgb]{0.93,0.88,1.0}
\textbf{TOPS (Ours)} & \underline{81.5} & \textbf{86.1} & \textbf{50.3} & \textbf{2284.6} & 81.4 & \underline{80.5} & \textbf{59.5} & \textbf{87.2} & \textbf{70.4} & \textbf{96.3\%} \\
\midrule
\multicolumn{11}{c}{\textit{Retain 128 Tokens ($\downarrow$ 90.1\%)}} \\
\midrule
\rowcolor[rgb]{1.0,0.88,0.88}
FastV (ECCV24) & 69.9 & 67.7 & 41.0 & 1596.8 & 66.4 & 68.8 & 43.7 & 79.6 & 57.0 & 78.0\% \\
\rowcolor[rgb]{0.88,0.96,0.88}
CDPruner (NeurIPS25) & \textbf{79.5} & \underline{80.5} & 41.3 & 2033.2 & 78.6 & \underline{78.7} & \underline{53.8} & 82.5 & \underline{65.5} & \underline{89.6\%} \\
\rowcolor[rgb]{0.88,0.93,1.0}
DivPrune (CVPR25) & 75.9 & \textbf{83.9} & \underline{44.5} & \underline{2044.3} & \underline{79.2} & 78.6 & 52.3 & \underline{82.8} & \underline{65.5} & \underline{89.6\%} \\
\rowcolor[rgb]{0.93,0.88,1.0}
\textbf{TOPS (Ours)} & \underline{78.9} & \textbf{83.9} & \textbf{46.8} & \textbf{2217.3} & \textbf{80.2} & \textbf{78.8} & \textbf{53.9} & \textbf{85.3} & \textbf{67.9} & \textbf{92.9\%} \\
\midrule
\multicolumn{11}{c}{\textit{InternVL3-8B — Upper Bound, All 1280 Tokens (100\%)}} \\
\midrule
Baseline & 85.1 & 90.4 & 49.4 & 2369.1 & 85.7 & 85.1 & 68.3 & 97.9 & 74.9 & 100.0\% \\
\midrule
\multicolumn{11}{c}{\textit{Retain 256 Tokens ($\downarrow$ 80.0\%)}} \\
\midrule
\rowcolor[rgb]{1.0,0.88,0.88}
FastV (ECCV24) & \underline{80.5} & 88.7 & \underline{44.0} & \underline{2289.4} & \underline{83.6} & \underline{83.7} & \underline{61.2} & \underline{93.3} & \underline{71.2} & \underline{95.1\%} \\
\rowcolor[rgb]{0.88,0.96,0.88}
CDPruner (NeurIPS25) & 78.8 & 89.1 & 41.5 & 2130.0 & 79.2 & 78.3 & 55.7 & 90.2 & 67.4 & 90.0\% \\
\rowcolor[rgb]{0.88,0.96,0.88}
VisionZip (CVPR25) & 76.0 & 85.6 & 41.0 & 2148.8 & 82.0 & 81.2 & 55.1 & 90.7 & 67.8 & 90.5\% \\
\rowcolor[rgb]{0.88,0.93,1.0}
DivPrune (CVPR25) & 80.3 & \textbf{89.4} & 43.0 & 2178.5 & 81.9 & 80.5 & 58.9 & 91.8 & 69.3 & 92.5\% \\
\rowcolor[rgb]{0.93,0.88,1.0}
\textbf{TOPS (Ours)} & \textbf{81.3} & \underline{89.3} & \textbf{45.8} & \textbf{2302.0} & \textbf{84.8} & \textbf{84.6} & \textbf{62.8} & \textbf{95.5} & \textbf{72.4} & \textbf{96.7\%} \\
\midrule
\multicolumn{11}{c}{\textit{Retain 128 Tokens ($\downarrow$ 90.0\%)}} \\
\midrule
\rowcolor[rgb]{1.0,0.88,0.88}
FastV (ECCV24) & 68.4 & 73.8 & \underline{38.9} & 1806.9 & 73.7 & 73.5 & 47.3 & 83.8 & 60.4 & 80.6\% \\
\rowcolor[rgb]{0.88,0.96,0.88}
CDPruner (NeurIPS25) & 73.1 & 86.9 & 37.4 & 1952.6 & 75.3 & 73.5 & 51.5 & 85.6 & 62.8 & 83.8\% \\
\rowcolor[rgb]{0.88,0.96,0.88}
VisionZip (CVPR25) & 68.5 & 77.9 & 33.6 & 1864.7 & 74.7 & 73.5 & 49.2 & 81.7 & 60.3 & 80.5\% \\
\rowcolor[rgb]{0.88,0.93,1.0}
DivPrune (CVPR25) & \underline{74.2} & \textbf{88.0} & 37.6 & \underline{2051.0} & \underline{78.3} & \underline{75.7} & \underline{52.0} & \underline{87.5} & \underline{64.6} & \underline{86.2\%} \\
\rowcolor[rgb]{0.93,0.88,1.0}
\textbf{TOPS (Ours)} & \textbf{76.1} & \underline{87.5} & \textbf{41.4} & \textbf{2252.0} & \textbf{82.0} & \textbf{80.5} & \textbf{57.1} & \textbf{91.8} & \textbf{68.8} & \textbf{91.9\%} \\
\bottomrule
\end{tabular}
}

    \vspace{-2mm}
\end{table*}

\subsection{Experimental Setup}
\paragraph{Model Architectures.}
We validate TOPS across multiple MLLM architectures, including LLaVA-1.5~\cite{liu2024improved} for image understanding, LLaVA-NeXT~\cite{liu2024llavanext} for high-resolution inputs, and LLaVA-Video~\cite{zhang2024llava} for video tasks. We also evaluate on advanced models Qwen2.5-VL-7B-Instruct~\cite{Qwen2.5-VL} and InternVL3-8B~\cite{zhu2025internvl3}. More experiments are provided in Appendix~\ref{sec:appendix_13b}.

\paragraph{Evaluation Benchmarks.}
We conduct experiments across diverse multimodal benchmarks. For image-based evaluation, we select 8 general VQA benchmarks: GQA~\cite{hudson2019gqa}, ScienceQA-IMG~\cite{lu2022learn}, TextVQA~\cite{singh2019towards}, POPE~\cite{li2023evaluating}, MME~\cite{fu2023mme}, MMBench-EN, MMBench-CN~\cite{liu2024mmbench}, and MM-Vet~\cite{yu2023mm}. Additionally, we evaluate on MMStar~\cite{chen2024we}, AI2D~\cite{kembhavi2016diagram}, and HallusionBench~\cite{guan2024hallusionbench}. For video understanding, we benchmark on MLVU~\cite{zhou2025mlvu}, LongVideoBench~\cite{wu2024longvideobench}, and Video-MME~\cite{fu2025video}.

\begin{table*}[t]
    \vspace{-2mm}
    \centering
    \centering
\setlength{\tabcolsep}{2pt}
\caption{Performance comparison of different methods on LLaVA-Video-7B with 64 frames per video. \textbf{Acc.} denotes average accuracy across 8 metrics of 3 benchmarks. \colorbox[rgb]{1.0,0.88,0.88}{Red}: attention-based. \colorbox[rgb]{0.88,0.93,1.0}{Blue}: diversity-based. \colorbox[rgb]{0.93,0.88,1.0}{Purple}: ours.}
\label{tab:video_comparison}
\vspace{-2mm}
\resizebox{\textwidth}{!}{%
\begin{tabular}{l|c|ccc|cccc|cc}
\toprule
\textbf{Method} & \textbf{MLVU} & \multicolumn{3}{c|}{\textbf{LongVideoBench}} & \multicolumn{4}{c|}{\textbf{Video-MME}} & \textbf{Acc.} & \textbf{Rel.} \\
\textbf{Metric} & \textbf{m-avg} & \textbf{val} & \textbf{perception} & \textbf{relation} & \textbf{w/o sub} & \textbf{short} & \textbf{medium} & \textbf{long} &  &  \\
\midrule
\multicolumn{11}{c}{\textit{Upper Bound, All $64 \times 169$ Tokens (100\%)}} \\
\midrule
Baseline & 67.7 & 59.0 & 65.0 & 53.8 & 63.6 & 76.6 & 61.2 & 53.1 & 62.5 & 100.0\% \\
\midrule
\multicolumn{11}{c}{\textit{Retain $64 \times 64$ Tokens ($\downarrow$ 62.1\%)}} \\
\midrule
\rowcolor[rgb]{1.0,0.88,0.88}
FastV (ECCV24) & 63.9 & 56.1 & 60.6 & 52.1 & \underline{61.9} & \underline{73.6} & 59.3 & \textbf{52.7} & 60.0 & 96.0\% \\
\rowcolor[rgb]{1.0,0.88,0.88}
SparseVLM (ICML25) & \underline{65.5} & 56.0 & 61.0 & 51.7 & 61.0 & 73.0 & 58.8 & 51.2 & 59.8 & 95.7\% \\
\rowcolor[rgb]{0.88,0.93,1.0}
DART (EMNLP25) & 64.1 & 57.5 & 62.1 & \underline{53.5} & 61.6 & 73.0 & \underline{59.9} & 51.9 & 60.5 & 96.8\% \\
\rowcolor[rgb]{0.88,0.93,1.0}
DivPrune (CVPR25) & 64.1 & \textbf{58.6} & \underline{64.2} & \textbf{53.7} & 61.1 & 72.9 & 59.3 & 51.2 & \underline{60.6} & \underline{97.0\%} \\
\rowcolor[rgb]{0.93,0.88,1.0}
\textbf{TOPS (Ours)} & \textbf{66.4} & \underline{57.6} & \textbf{64.6} & 51.4 & \textbf{63.0} & \textbf{75.3} & \textbf{61.4} & \underline{52.3} & \textbf{61.5} & \textbf{98.4\%} \\
\midrule
\multicolumn{11}{c}{\textit{Retain $64 \times 32$ Tokens ($\downarrow$ 81.1\%)}} \\
\midrule
\rowcolor[rgb]{1.0,0.88,0.88}
FastV (ECCV24) & 58.5 & 52.4 & 57.0 & 48.5 & 56.0 & 63.8 & 55.9 & 48.4 & 55.1 & 88.2\% \\
\rowcolor[rgb]{1.0,0.88,0.88}
SparseVLM (ICML25) & 60.7 & 53.7 & 58.1 & 49.9 & 59.0 & 69.8 & 56.9 & \underline{50.3} & 57.3 & 91.7\% \\
\rowcolor[rgb]{0.88,0.93,1.0}
DART (EMNLP25) & 61.1 & 54.1 & 57.8 & 50.8 & 58.1 & 67.3 & 57.1 & 50.0 & 57.0 & 91.2\% \\
\rowcolor[rgb]{0.88,0.93,1.0}
DivPrune (CVPR25) & \underline{61.5} & \underline{56.4} & \underline{62.1} & \textbf{51.4} & \underline{59.3} & \underline{69.9} & \underline{57.9} & 50.2 & \underline{58.6} & \underline{93.8\%} \\
\rowcolor[rgb]{0.93,0.88,1.0}
\textbf{TOPS (Ours)} & \textbf{64.3} & \textbf{56.9} & \textbf{63.5} & \underline{51.1} & \textbf{61.2} & \textbf{72.3} & \textbf{58.8} & \textbf{52.6} & \textbf{60.1} & \textbf{96.2\%} \\
\midrule
\multicolumn{11}{c}{\textit{Retain $64 \times 16$ Tokens ($\downarrow$ 90.5\%)}} \\
\midrule
\rowcolor[rgb]{1.0,0.88,0.88}
FastV (ECCV24) & 52.8 & 46.6 & 48.8 & 44.7 & 50.0 & 55.0 & 50.0 & 45.0 & 49.1 & 78.6\% \\
\rowcolor[rgb]{1.0,0.88,0.88}
SparseVLM (ICML25) & 52.0 & 47.6 & 53.0 & 42.8 & 49.8 & 53.8 & 49.3 & 46.3 & 49.3 & 78.9\% \\
\rowcolor[rgb]{0.88,0.93,1.0}
DART (EMNLP25) & 56.7 & 51.8 & 56.8 & \underline{47.5} & 55.3 & 64.8 & 52.9 & 48.1 & 54.2 & 86.7\% \\
\rowcolor[rgb]{0.88,0.93,1.0}
DivPrune (CVPR25) & \underline{58.6} & \underline{52.1} & \underline{57.6} & 47.2 & \underline{56.7} & \underline{67.7} & \underline{54.2} & \underline{48.2} & \underline{55.3} & \underline{88.5\%} \\
\rowcolor[rgb]{0.93,0.88,1.0}
\textbf{TOPS (Ours)} & \textbf{60.7} & \textbf{54.3} & \textbf{60.8} & \textbf{48.6} & \textbf{59.4} & \textbf{70.9} & \textbf{57.4} & \textbf{49.8} & \textbf{57.7} & \textbf{92.3\%} \\
\bottomrule
\end{tabular}
}

%
%
   
\end{table*}

\paragraph{Comparison Methods.}
We compare TOPS with recent methods, including FastV~\cite{chen2024image}, PyramidDrop~\cite{xing2024pyramiddrop}, SparseVLM~\cite{zhang2024sparsevlm}, DivPrune~\cite{alvar2025divprune}, DART~\cite{wen2025stop}, VisionZip~\cite{yang2025visionzip}, TRIM~\cite{song2025less}, PruMerge+~\cite{shang2025llava}, SCOPE~\cite{deng2025scope}, and CDPruner~\cite{zhang2025cdpruner}.

\subsection{TOPS for LLaVA and LLaVA-NeXT}

We first evaluate TOPS on LLaVA-1.5 and LLaVA-NeXT, widely adopted for benchmarking token pruning (Table~\ref{tab:llava_series_summary}; full per-benchmark results in Appendix~\ref{sec:appendix_13b}.). On LLaVA-1.5-7B, TOPS retains 98.3\% of the original performance at 77.8\% compression, surpassing SCOPE by 0.5\%. At 64 tokens, attention-based methods degrade by over 25\%, while TOPS only decreases by 1.2\%. Even at 32 tokens (5.6\% retained), TOPS maintains 94.6\%, outperforming SCOPE by 1.1\%. On LLaVA-NeXT with 2,880 visual tokens, TOPS achieves 100.0\% performance at 77.8\% compression, and retains 99.1\% and 96.4\% at 88.9\% and 94.4\% reduction, outperforming SCOPE by 1.3\% and 2.0\%.

\vspace{-1pt}
\subsection{TOPS for Qwen2.5-VL and InternVL3}

To verify generalizability, we further evaluate TOPS on Qwen2.5-VL-7B-Instruct and InternVL3-8B (Table~\ref{tab:pruning_comparison_advanced}), two architectures with different visual encoders and fusion strategies.At $\sim$80\% pruning, TOPS preserves 96.3\% and 96.7\% of the original performance, surpassing the best competing methods by 2.0\% and 1.6\%, respectively. At $\sim$90\% pruning, TOPS retains 92.9\% and 91.9\% accuracy while its advantage amplifies, reaching +3.3\% over CDPruner/DivPrune on Qwen2.5-VL and +5.7\% over DivPrune on InternVL3. Notably, the best baseline differs across architectures (CDPruner on Qwen2.5-VL vs.\ FastV on InternVL3), yet TOPS consistently ranks first, demonstrating architecture-agnostic effectiveness. TOPS achieves the best scores across all settings On HallusionBench, indicating stronger resistance to hallucination.

\vspace{-1pt}
\subsection{TOPS for LLaVA-Video}

Video understanding is highly redundant because multi-frame inputs introduce many visual tokens. We apply TOPS to LLaVA-Video with up to 64 frames at $384{\times}384$ resolution, producing over 10K visual tokens (Table~\ref{tab:video_comparison}). TOPS remains robust under aggressive compression, retaining 98.4\% and 96.2\% performance at 62.1\% and 81.1\% token reduction, respectively. Even with only 16 tokens per frame, TOPS maintains 92.3\%, while FastV drops to 78.6\%.

\begin{table}
    \centering
    \vspace{2mm}
    \small
\setlength{\tabcolsep}{2pt}

\caption{Efficiency analysis on LLaVA-NeXT-7B (POPE benchmark). Latency in ms; Memory in GB.}
\label{tab:efficiency}

\vspace{-8pt}

\resizebox{\linewidth}{!}{%
\begin{tabular}{l|c|c|c|c|c}
\toprule
\textbf{Method} & \textbf{\#Tok} & \textbf{FLOPs (T)} & \textbf{Lat. (ms)} & \textbf{Mem. (GB)} & \textbf{F1} \\
\midrule
Baseline & 2880 & 41.7 & 265 & 16.7 & 86.8 \\
\midrule
\rowcolor[rgb]{1.0,0.88,0.88} FastV (ECCV24) & 320 & \underline{4.4 ($\times$9.5)} & 77 & \underline{15.6} & 49.5 \\
\rowcolor[rgb]{1.0,0.88,0.88} PDrop (CVPR25) & 320 & \underline{4.4 ($\times$9.4)} & 67 & \underline{15.6} & 60.8 \\
\rowcolor[rgb]{1.0,0.88,0.88} SparseVLM (ICML25) & 320 & \underline{4.4 ($\times$9.5)} & 101 & 18.6 & \underline{85.3} \\
\rowcolor[rgb]{0.88,0.96,0.88} PruMerge+ (ICCV25) & 320 & \textbf{4.2 ($\times$9.9)} & \textbf{54} & \textbf{14.8} & 79.5 \\
\rowcolor[rgb]{0.88,0.96,0.88} VisionZip (CVPR25) & 320 & \textbf{4.2 ($\times$9.9)} & \underline{60} & \textbf{14.8} & 82.3 \\
\rowcolor[rgb]{0.93,0.88,1.0} \textbf{TOPS (Ours)} & 320 & \textbf{4.2 ($\times$9.9)} & 85 & \textbf{14.8} & \textbf{86.3} \\
\bottomrule
\end{tabular}
}
    \vspace{-4mm}
\end{table}

\begin{table*}[t]
    \centering
    \small
\setlength{\tabcolsep}{3pt}
\caption{Ablation of token selection criteria.}
\label{tab:ablation_rdc}
\vspace{-7pt}
\resizebox{\linewidth}{!}{%
\begin{tabular}{lcccccccc|cc}
\toprule
\textbf{Criteria} & \textbf{GQA} & \textbf{SQA} & \textbf{TVQA} & \textbf{POPE} & \textbf{MME} & \textbf{MMB}$^{\text{EN}}$ & \textbf{MMB}$^{\text{CN}}$ & \textbf{MMVet} & \textbf{Acc.} & \textbf{Rel.} \\
\midrule
\rowcolor[rgb]{0.90,0.96,0.90}
Relevance only         & 53.5 & \textbf{69.2} & 53.9 & 77.5          & 1347.4             & \underline{60.6} & 54.9             & 24.5             & 57.5             & 91.1\% \\
\rowcolor[rgb]{0.90,0.96,0.90}
Diversity only         & 55.0 & 67.5          & 53.1 & \textbf{84.7} & 1355.4             & 58.1             & 52.4             & 27.5             & 58.3             & 92.4\% \\
\rowcolor[rgb]{0.90,0.96,0.90}
Coverage only          & \underline{56.6} & 68.9 & 51.4 & 83.2        & 1358.1             & 59.5             & 51.6             & 24.8             & 58.0             & 91.9\% \\
\midrule
\rowcolor[rgb]{0.83,0.90,0.96}
R + D                  & 55.6 & 69.0          & 54.4          & 81.7             & 1365.0             & 60.2             & \underline{55.2} & 27.8             & 59.0             & 93.5\% \\
\rowcolor[rgb]{0.83,0.90,0.96}
R + C                  & 56.4 & \underline{69.1} & \underline{54.7} & 82.1        & \underline{1371.6} & \textbf{61.0}    & \textbf{55.6}    & \underline{28.0} & \underline{59.4} & \underline{94.1\%} \\
\rowcolor[rgb]{0.83,0.90,0.96}
D + C                  & 55.7 & 68.4          & 53.0          & \underline{84.4} & 1368.6             & 58.7             & 52.4             & 26.2             & 58.4             & 92.6\% \\
\midrule
\rowcolor[rgb]{0.93,0.88,1.0}
\textbf{TOPS}          & \textbf{56.7} & 68.8 & \textbf{54.9} & 83.5 & \textbf{1384.7} & 59.5 & 55.1 & \textbf{29.7} & \textbf{59.7} & \textbf{94.6\%} \\
\bottomrule
\end{tabular}
}

    \vspace{-4mm}
\end{table*}

\begin{figure*}[t]
    \centering
    \includegraphics[width=\linewidth]{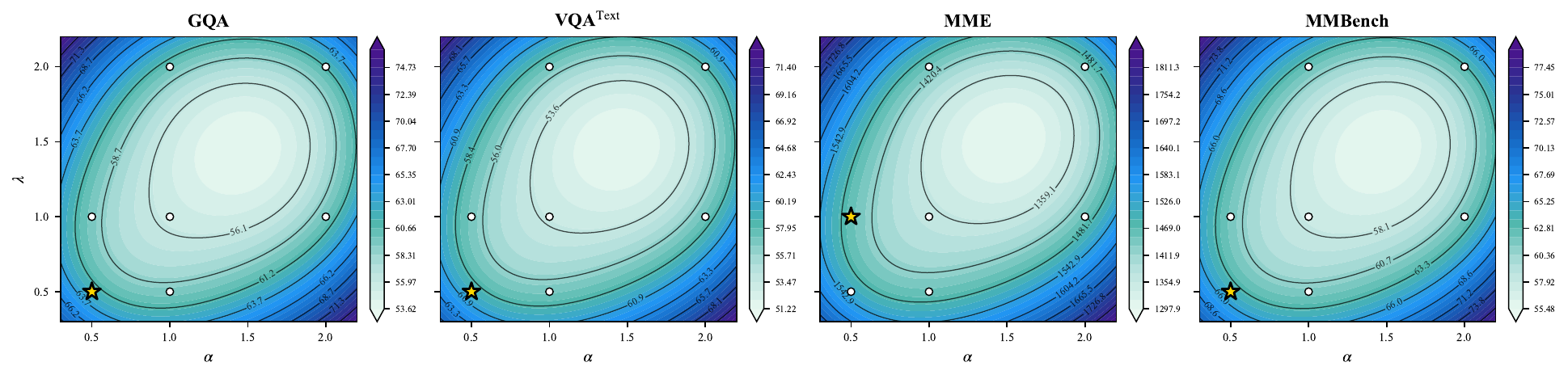}
    \vspace{-8mm}
    \caption{\textbf{Hyperparameter sensitivity of $\alpha$ and $\lambda$.} Contour plots across seven ($\alpha$, $\lambda$) configurations at 64 tokens on LLaVA-1.5-7B. \textbf{Star}: optimal; \textbf{white dots}: other configurations. The optimal ($\alpha$, $\lambda$) generally falls within $[0.5, 1]$.}
    \label{fig:ablation_hyperparams}
\end{figure*}

\subsection{Computational Efficiency}
\label{sec:computational_efficiency}

To demonstrate the efficiency of TOPS, we conduct a comparative analysis against other methods in terms of FLOPs, CUDA latency, GPU memory and F1 score on LLaVA-NeXT-7B. 
Experiments are performed on a single NVIDIA A800-80GB GPU. We use POPE for evaluating inference efficiency, as it contains questions of similar length and involves only one prefill and one decode stage.

As shown in Table~\ref{tab:efficiency}, when the number of visual tokens is reduced from 2,880 to 320, TOPS achieves nearly a ${\times}10$ reduction in FLOPs. In terms of runtime latency, TOPS reduces prefill time and decode time by ${\times}3.12$ and ${\times}1.05$, significantly improving real-world inference efficiency. In addition to latency, TOPS reduces GPU memory usage by 1.9GB. Compared to other methods, TOPS achieves the best performance (86.3 vs.\ 85.3) while maintaining comparable or even better efficiency.

\subsection{Ablation Studies}
\label{sec:ablation_studies}

We conduct a series of ablation studies to analyze the key design choices of TOPS on LLaVA-1.5-7B. 
Table~\ref{tab:ablation_stages} examines contribution of each stage in the two-stage pipeline, showing that TOPS's full two-stage design yields the best results. We further evaluate all combinations of Relevance (R), Diversity (D), and Coverage (C) at 32 tokens (Table~\ref{tab:ablation_rdc}), where TOPS achieves the best performance. 

\begin{table}[h]
\centering
\vspace{-2mm}
\small
\setlength{\tabcolsep}{3pt}

\caption{Ablation of two stages.}
\vspace{-3mm}
\label{tab:ablation_stages}
\vspace{-1pt}

\resizebox{\linewidth}{!}{%
\begin{tabular}{lccccc}
\toprule
\textbf{Setting} & \textbf{GQA} & \textbf{POPE} & \textbf{MME} & \textbf{MMB}$^{\text{EN}}$ & \textbf{MMB}$^{\text{CN}}$ \\
\midrule
\rowcolor[rgb]{0.90,0.96,0.90}
S1-only & \underline{59.2} & 86.2 & \underline{1444.0} & 61.1 & \underline{56.5} \\
\rowcolor[rgb]{0.83,0.90,0.96}
S2-only & 59.1 & \underline{86.5} & 1412.6 & \underline{61.7} & 56.0 \\
\midrule
\rowcolor[rgb]{0.93,0.88,1.0}
\textbf{TOPS} & \textbf{60.5} & \textbf{86.8} & \textbf{1482.7} & \textbf{62.5} & \textbf{57.2} \\
\bottomrule
\end{tabular}%
}

\vspace{-2mm}
\end{table}

As shown in Figure~\ref{fig:ablation_token_budget}, TOPS consistently outperforms FastV, DivPrune, and SCOPE across all five token budgets, and its advantage widens under more aggressive compression. Figure~\ref{fig:ablation_hyperparams} visualizes the sensitivity of $\alpha$ (diversity weight) and $\lambda$ (coverage weight) across 8 benchmarks at 64 tokens. Across seven ($\alpha$, $\lambda$) configurations, the optimal values consistently fall within $[0.5, 1]$ for both parameters, with the exception of POPE, where a larger $\alpha{=}2$ is preferred due to its binary question format that benefits from stronger diversity. The overall performance variation remains within 1--2\% across all configurations, confirming that TOPS is robust to hyperparameter choices. Additional ablations on pruning layer positions are provided in Appendix~\ref{sec:appendix_pruning_layers}.

\begin{figure}[h]
    \centering
    \begin{subfigure}{0.48\linewidth}
        \centering
        \includegraphics[width=\linewidth]{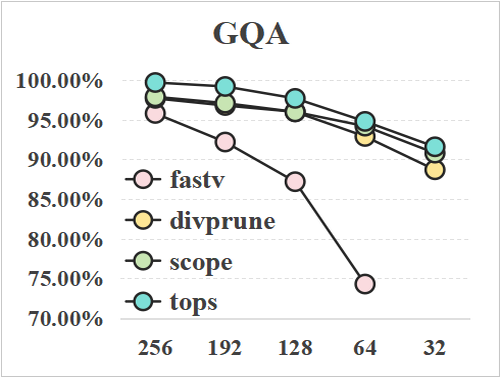}
    \end{subfigure}
    \hfill
    \begin{subfigure}{0.48\linewidth}
        \centering
        \includegraphics[width=\linewidth]{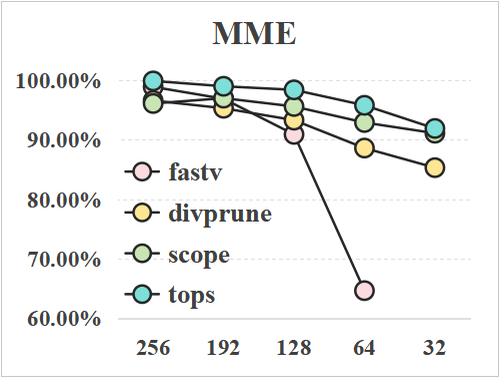}
    \end{subfigure}
    \\
    \begin{subfigure}{0.48\linewidth}
        \centering
        \includegraphics[width=\linewidth]{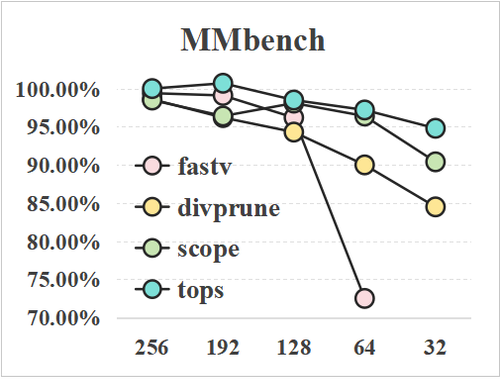}
    \end{subfigure}
    \hfill
    \begin{subfigure}{0.48\linewidth}
        \centering
        \includegraphics[width=\linewidth]{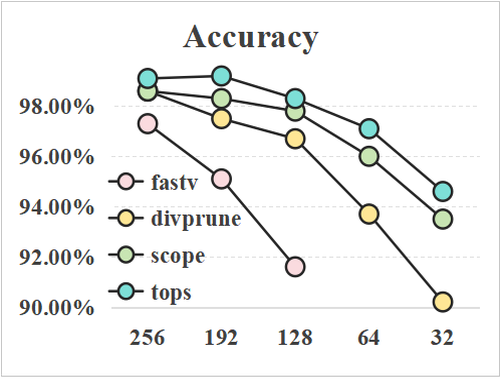}
    \end{subfigure}
    \vspace{-1mm}
    \caption{\textbf{Robustness across token budgets.} Relative performance (\%) of FastV, DivPrune, SCOPE and TOPS at five budgets on LLaVA-1.5-7B.}
    \label{fig:ablation_token_budget}
    \vspace{-5mm}
\end{figure}

\section{Conclusion}
In this work, we revisit visual token pruning from a first-principles perspective and identify its fundamental objective. Based on an information-theoretic analysis, we derive three key principles---task relevance, information coverage, and semantic diversity---and propose TOPS, a plug-and-play pruning method that constructs compact yet informative token subsets. Extensive experiments across multiple MLLMs and benchmarks demonstrate that TOPS consistently achieves superior performance under aggressive token reduction while maintaining strong generalization across model architectures and tasks. These results show that effective token pruning should not rely solely on heuristic importance scores, but should jointly preserve task-relevant, representative, and non-redundant visual information. We believe that our work provides a principled foundation and offers useful guidance for future research on token pruning and efficient multimodal inference.

\section*{Limitations}

While TOPS demonstrates consistent improvements across diverse LVLMs and benchmarks, several limitations remain.

\paragraph{Greedy construction overhead.}
TOPS constructs the preserved token set greedily, updating diversity and coverage scores after each selection step. Although both scores share the same max-similarity structure and can be maintained incrementally, the greedy loop still introduces $O(KN)$ additional operations per pruning layer (where $K$ is the target budget and $N$ is the current token count), which is non-negligible at very aggressive budgets or when pruning is applied at many layers. Future work could explore approximate or parallel variants to further reduce this overhead.

\paragraph{Attention-based relevance signal.}
The task relevance score in TOPS relies on cross-modal attention weights as a proxy for query-conditioned importance. While this signal is readily available in standard transformer architectures, it may be less reliable in models that use alternative attention mechanisms (e.g., linear attention or sparse attention), or in very early layers where text-visual attention has not yet matured. The Stage~I relevance estimation similarly relies on CLS attention, which may not generalize equally well to vision encoders that lack a dedicated CLS token.

\paragraph{Fixed hyperparameters across layers and tasks.}
The balance coefficients $\alpha$ (diversity) and $\lambda$ (coverage) are set globally and kept fixed across all pruning layers and all tasks. In practice, the optimal trade-off between relevance, coverage, and diversity may vary with pruning depth, token budget, and task type. An adaptive scheme that adjusts these weights per layer or per query could further improve performance, particularly under extreme compression ratios.

\paragraph{Evaluation scope.}
Our experiments focus on standard vision-language benchmarks covering image understanding, video understanding, and OCR-heavy tasks. Performance in highly specialized domains (e.g., medical imaging, remote sensing, or dense captioning with hundreds of objects) has not been systematically evaluated, and the generalization of the three-criterion framework to such settings remains an open question.

\bibliography{main}

\clearpage
\appendix
\begin{center}
{\large\textbf{TOPS: First-Principles Visual Token Pruning via Constructing Token Optimal Preservation Sets for Efficient MLLM Inference}}\\[0.4em]
{\large Appendix}
\end{center}

\vspace{2pt}
\noindent Appendix~\ref{sec:appendix_setup} provides comprehensive details about model architectures, evaluation benchmarks, and baseline methods.
Appendix~\ref{sec:appendix_algorithm} presents the complete TOPS algorithm and theoretical foundations.
Appendix~\ref{sec:appendix_implementation} describes implementation details including hyperparameters and pruning schedules.
Appendix~\ref{sec:appendix_13b} reports additional results on LLaVA-1.5-7B, LLaVA-NeXT-7B, LLaVA-1.5-13B, and LLaVA-NeXT-13B.
Appendix~\ref{sec:appendix_ablation} provides additional ablation studies.
Appendix~\ref{sec:appendix_empirical} presents additional empirical analyses supporting the motivation of TOPS.
Appendix~\ref{sec:appendix_visualization} presents qualitative comparisons of token selections.
Appendix~\ref{sec:appendix_radar} provides per-benchmark radar visualizations across all models and compression ratios.

\section{Experimental Setup Details}
\label{sec:appendix_setup}

\subsection{Model Architectures}
\label{sec:appendix_setup_models}

\paragraph{LLaVA-1.5~\cite{liu2024improved}.}
We evaluate the LLaVA-1.5 architecture, which combines a CLIP ViT-L/14 vision encoder with Vicuna-7B/13B language models through a two-layer MLP projector. For our experiments, we process images at $336\times336$ resolution, yielding 576 visual tokens ($24\times24$ spatial grid). We conduct experiments on both 7B and 13B model scales.

\paragraph{LLaVA-NeXT~\cite{liu2024llavanext}.}
This iteration introduces adaptive resolution handling to accommodate higher-quality visual inputs. The model dynamically partitions high-resolution images into multiple tiles and processes each tile through the vision encoder independently. For controlled evaluation, we standardize the input to $672\times672$ resolution, generating 2,880 visual tokens.

\paragraph{LLaVA-Video~\cite{zhang2024llava}.}
We extend our evaluation to this video-specialized variant that processes temporal sequences of frames. The architecture employs SigLIP as the vision backbone and samples 64 frames per video clip at $384\times384$ resolution, producing an initial token count of 10,816 visual tokens.

\paragraph{InternVL3-8B~\cite{zhu2025internvl3}.}
We evaluate InternVL3, adopting a ViT-MLP-LLM design with native multimodal pre-training. For our experiments, we configure the input resolution to $448\times448$, producing 1,280 visual tokens.

\paragraph{Qwen2.5-VL-7B-Instruct~\cite{Qwen2.5-VL}.}
We assess Qwen2.5-VL, featuring a redesigned Vision Transformer with window attention, SwiGLU activations, and RMSNorm, built upon the Qwen2.5 language model. We evaluate the 7B-Instruct variant using its default dynamic resolution settings.

\subsection{Evaluation Benchmarks}
\label{sec:appendix_setup_benchmarks}

\paragraph{GQA~\cite{hudson2019gqa}.} A visual reasoning benchmark grounded in scene graphs. We report accuracy on the balanced test split with 12,578 questions.
\paragraph{ScienceQA-IMG~\cite{lu2022learn}.} A multimodal multiple-choice benchmark covering diverse scientific subjects.
\paragraph{TextVQA~\cite{singh2019towards}.} Evaluates reading and reasoning about text in images. The validation set has 5,000 questions.
\paragraph{POPE~\cite{li2023evaluating}.} Polling-based Object Probing Evaluation assesses object hallucination using binary yes/no questions. We use the adversarial split with 3,000 questions.
\paragraph{MME~\cite{fu2023mme}.} A comprehensive benchmark evaluating both perception and cognition abilities, containing 14 subtasks.
\paragraph{MMBench~\cite{liu2024mmbench}.} A systematically-designed benchmark covering 20 ability dimensions. We report results on both English and Chinese test sets.
\paragraph{MM-Vet~\cite{yu2023mm}.} Focuses on integrated multimodal capabilities across 6 core abilities with 218 curated examples.
\paragraph{MMStar~\cite{chen2024we}.} A comprehensive vision-language benchmark evaluating diverse capabilities including coarse and fine-grained perception.
\paragraph{AI2D~\cite{kembhavi2016diagram}.} A diagram QA benchmark with over 5,000 science diagrams and 15,000 questions.
\paragraph{HallusionBench~\cite{guan2024hallusionbench}.} Evaluates vision-language models' susceptibility to language hallucination and visual illusion.
\paragraph{MLVU~\cite{zhou2025mlvu}.} A multi-task long video understanding benchmark. We report the mean average score (m-avg) across all subtasks.
\paragraph{LongVideoBench~\cite{wu2024longvideobench}.} A long-context video-language benchmark with interleaved video-language inputs of up to one hour.
\paragraph{Video-MME~\cite{fu2025video}.} A comprehensive video multimodal evaluation benchmark. We report results in the no-subtitle setting across short, medium, and long video splits.

\subsection{Baseline Methods}
\label{sec:appendix_setup_baselines}

We compare TOPS against 10 recent training-free visual token pruning methods.

\paragraph{FastV~\cite{chen2024image}.} An in-LLM pruning method that ranks visual tokens by text-to-visual cross-modal attention scores at layer 2 and removes those with the lowest scores.
\paragraph{PyramidDrop~\cite{xing2024pyramiddrop}.} A progressive in-LLM method that drops a fixed fraction of visual tokens at the end of each decoder stage based on attention importance.
\paragraph{SparseVLM~\cite{zhang2024sparsevlm}.} A text-aware in-LLM method that identifies high-quality text rater tokens and uses their cross-modal attention patterns to guide visual token sparsification.
\paragraph{PruMerge+~\cite{shang2025llava}.} A pre-LLM pruning-and-merging method that computes token importance via attention sparsity and merges similar tokens using $k$-nearest-neighbor matching.
\paragraph{TRIM~\cite{song2025less}.} A pre-LLM method that leverages CLIP-based vision-text similarity to score and retain visual tokens with the highest cross-modal relevance.
\paragraph{VisionZip~\cite{yang2025visionzip}.} A pre-LLM method that selects dominant tokens based on CLS-to-patch attention scores and additionally identifies contextual tokens through clustering.
\paragraph{DART~\cite{wen2025stop}.} A diversity-based in-LLM method that iteratively selects the most diverse tokens by choosing candidates with the lowest similarity to already-selected ones.
\paragraph{DivPrune~\cite{alvar2025divprune}.} A diversity-based pre-LLM method that reformulates token selection as a max-min diversity problem (MMDP).
\paragraph{SCOPE~\cite{deng2025scope}.} A hybrid method combining CLS-based saliency scoring with a submodular coverage term that penalizes semantically redundant tokens.
\paragraph{CDPruner~\cite{zhang2025cdpruner}.} A conditional-diversity method that defines similarity between visual tokens conditioned on the user instruction.

\section{Algorithm and Theoretical Analysis}
\label{sec:appendix_algorithm}

\subsection{Complete Algorithm}
\label{sec:appendix_algorithm_complete}

Algorithm~\ref{alg:stage1} provides the vision-side pruning procedure of TOPS, which is applied immediately after the multimodal projector and before the LLM. Algorithm~\ref{alg:stage2} provides the in-LLM pruning procedure at a designated pruning layer, which is applied progressively across selected LLM layers.

\subsection{Theoretical Analysis of Coverage and Diversity}
\label{sec:appendix_submodularity}

We formally establish the theoretical properties of the coverage and diversity criteria used in TOPS.

\begin{proposition}[Submodularity of Coverage]
\label{prop:coverage_submodular}
The coverage function
\[
F(S)=\sum_{v_i\in V}\max_{s\in S}\operatorname{sim}(v_i,s)
\]
is submodular.
\end{proposition}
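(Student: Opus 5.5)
The plan is to reduce Proposition~\ref{prop:coverage_submodular} to a pointwise statement. Since $F$ is a sum over $v_i\in V$ of functions of $S$, and nonnegative sums of submodular functions are submodular, it suffices to show that for each fixed $v\in V$ the function
\[
f_v(S)=\max_{s\in S}\operatorname{sim}(v,s)
\]
is submodular on subsets of $V$. Submodularity of $F$ then follows by summing the per-point diminishing-returns inequalities.

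The empty set needs a convention, since $\max_{s\in\emptyset}$ is undefined. I would fix $f_v(\emptyset)=m$ for some constant $m\le \min_{s}\operatorname{sim}(v,s)$, for example $m=0$ when similarities are nonnegative, or $m=-1$ for cosine similarity. Equivalently, I would prove the inequality only for nonempty $A$. This convention is the one point that needs care: if $f_v(\emptyset)$ were set above some similarity value, monotonicity and the argument below could fail at $A=\emptyset$. With this choice, $f_v$ is monotone nondecreasing.

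I would verify the diminishing-returns form directly. Fix $A\subseteq B\subseteq V$ and $x\notin B$, and write $a=f_v(A)$, $b=f_v(B)$, and $c=\operatorname{sim}(v,x)$. Because $A\subseteq B$, monotonicity gives $a\le b$. The two marginal gains are
\[
f_v(A\cup\{x\})-f_v(A)=\max(a,c)-a=\max(0,c-a),
\]
\[
f_v(B\cup\{x\})-f_v(B)=\max(0,c-b).
\]
The map $t\mapsto\max(0,c-t)$ is nonincreasing, so $a\le b$ implies $\max(0,c-a)\ge\max(0,c-b)$, which is exactly the diminishing-returns inequality for $f_v$.

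Summing over $v_i\in V$ gives $F(A\cup\{x\})-F(A)\ge F(B\cup\{x\})-F(B)$, so $F$ is submodular. As a remark, the marginal gain $\sum_j\max\bigl(0,\operatorname{sim}(h_x,h_j)-\max_{k\in S}\operatorname{sim}(h_j,h_k)\bigr)$ is exactly the coverage score $c_x(U)$ used in the method, up to restricting the sum to $U$. This ties the proposition to the greedy criterion in Eq.~\eqref{eq:method_tops_score}.
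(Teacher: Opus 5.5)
Your proposal is correct and follows essentially the same route as the paper. Both arguments use $A\subseteq B \Rightarrow \max_{a\in A}\operatorname{sim}(v_i,a)\le\max_{b\in B}\operatorname{sim}(v_i,b)$ to compare the pointwise gains $\max(0,\operatorname{sim}(v_i,x)-m)$ and then sum over $v_i\in V$; your explicit convention for the value at $\emptyset$ is a small refinement that the paper's proof leaves implicit.
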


\begin{proof}
For any $A\subseteq B\subseteq V$ and $x\in V\setminus B$, define
\[
m_A(v_i)=\max_{a\in A}\operatorname{sim}(v_i,a),\qquad
m_B(v_i)=\max_{b\in B}\operatorname{sim}(v_i,b).
\]
Since $A\subseteq B$, we have $m_A(v_i)\le m_B(v_i)$ for every $v_i\in V$.
The marginal gain of adding $x$ to $A$ for token $v_i$ is
\[
\Delta_x(v_i\mid A)=\max\!\bigl(\operatorname{sim}(v_i,x)-m_A(v_i),\;0\bigr),
\]
and likewise $\Delta_x(v_i\mid B)=\max\!\bigl(\operatorname{sim}(v_i,x)-m_B(v_i),\;0\bigr)$.
Since $m_A(v_i)\le m_B(v_i)$, it follows that $\Delta_x(v_i\mid A)\ge\Delta_x(v_i\mid B)$ for every $v_i\in V$.
Summing over all tokens:
\[
F(A\cup\{x\})-F(A)\;\ge\;F(B\cup\{x\})-F(B),
\]
which is the submodularity condition.
\end{proof}

\begin{remark}[Diminishing Marginal Contribution of Diversity]
\label{rem:diversity_diminishing}
The diversity score $d_i(S)=1-\max_{j\in S}\operatorname{sim}(h_i,h_j)$ satisfies the following \emph{diminishing marginal contribution property}: for any $S\subseteq T$ and $i\notin T$,
$d_i(S)\ge d_i(T)$.
Since $S\subseteq T$, $\max_{j\in T}\operatorname{sim}(h_i,h_j)\ge\max_{j\in S}\operatorname{sim}(h_i,h_j)$, hence
$d_i(T)=1-\max_{j\in T}\operatorname{sim}(h_i,h_j)\le d_i(S)$.
This guarantees that as the retained set grows, each new token contributes progressively less diversity, providing the same intuitive justification for greedy construction as submodularity does for coverage.
\end{remark}

\begin{algorithm}[H]
\caption{TOPS Stage~I --- Vision-Side Token Pruning}
\label{alg:stage1}
\scriptsize
\begin{algorithmic}[1]
\REQUIRE
\begin{tabular}[t]{@{}l@{}}
Projected visual tokens $V^{(0)}=\{h_i^{(0)}\}_{i=1}^{N}$, \\
CLS-to-patch attention scores $a_i$, target budget $M_0$, \\
balance factors $\alpha_1,\lambda_1$
\end{tabular}
\ENSURE Coarsely pruned visual token set $V^{(1)}$

\STATE
\begin{tabular}[t]{@{}l@{}}
Precompute similarity matrix $\mathbf{F}^{(0)}$ from \\
projected visual tokens $\{h_i^{(0)}\}_{i=1}^{N}$
\end{tabular}

\STATE
\begin{tabular}[t]{@{}l@{}}
$S \leftarrow \{i_0\}$ where $i_0=\arg\max_i a_i$ \\
\COMMENT{seed: highest vision-side relevance}
\end{tabular}

\STATE
\begin{tabular}[t]{@{}l@{}}
Initialize $\mathrm{maxsim}_j \leftarrow \mathbf{F}^{(0)}[j,i_0]$ \\
for all $j \notin S$
\end{tabular}

\WHILE{$|S| < M_0$}
    \FOR{each token $i \in V^{(0)} \setminus S$}
        \STATE $\mathrm{div}_i^{(0)} \leftarrow 1-\max_{j\in S}\mathbf{F}^{(0)}[i,j]$

        \STATE
        \begin{tabular}[t]{@{}l@{}}
        $\mathrm{cov}_i^{(0)} \leftarrow \displaystyle\sum_{j\in V^{(0)}\setminus S}$ \\
        $\qquad\max\!\big(0,\,\mathbf{F}^{(0)}[i,j]
        -\max_{k\in S}\mathbf{F}^{(0)}[j,k]\big)$
        \end{tabular}
    \ENDFOR

    \STATE
    \begin{tabular}[t]{@{}l@{}}
    Normalize each criterion by its mean: \\
    $\tilde{a}_i,\;\widetilde{\mathrm{div}}_i^{(0)},\;
    \widetilde{\mathrm{cov}}_i^{(0)}$
    \end{tabular}

    \STATE
    \begin{tabular}[t]{@{}l@{}}
    $\mathrm{score}_i^{(0)} \leftarrow
    \tilde{a}_i+\alpha_1\,\widetilde{\mathrm{div}}_i^{(0)}$ \\
    $\qquad+\lambda_1\,\widetilde{\mathrm{cov}}_i^{(0)}$
    \end{tabular}

    \STATE $i^\star \leftarrow \arg\max_{i\in V^{(0)}\setminus S}\mathrm{score}_i^{(0)}$
    \STATE $S \leftarrow S\cup\{i^\star\}$

    \STATE
    \begin{tabular}[t]{@{}l@{}}
    Update $\mathrm{maxsim}_j \leftarrow
    \max(\mathrm{maxsim}_j,\mathbf{F}^{(0)}[j,i^\star])$ \\
    for all $j\notin S$
    \end{tabular}
\ENDWHILE

\STATE $V^{(1)} \leftarrow S$
\STATE \textbf{return} $V^{(1)}$
\end{algorithmic}
\end{algorithm}

\section{Implementation Details}
\label{sec:appendix_implementation}

\subsection{Codebase}
\label{sec:appendix_implementation_codebase}

We implement TOPS on top of the official LLaVA codebase\footnote{\url{https://github.com/haotian-liu/LLaVA}} for image-based LLaVA models (LLaVA-1.5-7B and 13B). For LLaVA-NeXT and its high-resolution variants, we build on the LLaVA-NeXT codebase\footnote{\url{https://github.com/LLaVA-VL/LLaVA-NeXT}}. For LLaVA-Video, we adopt the same LLaVA-NeXT codebase~\cite{zhang2024llava} and use \texttt{lmms-eval}\footnote{\url{https://github.com/EvolvingLMMs-Lab/lmms-eval}} for video benchmark evaluation. For advanced architectures (Qwen2.5-VL and InternVL3), we integrate TOPS via \texttt{VLMEvalKit}\footnote{\url{https://github.com/open-compass/VLMEvalKit}} to enable unified evaluation across all benchmarks.

\subsection{Hyperparameters}
\label{sec:appendix_implementation_hyperparams}

Unless otherwise specified, we set the balance factors $\alpha$ (diversity weight) and $\lambda$ (coverage weight) through validation on a small held-out subset of POPE, MME, and GQA. Table~\ref{tab:hyperparams_model} reports the $(\alpha, \lambda)$ pairs used for LLaVA-1.5 and LLaVA-NeXT at each pruning ratio. Across all settings, $\alpha = 0.5$ is kept fixed, while $\lambda$ varies slightly between $0.4$ and $1.0$ depending on the compression level. For Stage~I we uniformly use $\alpha_1 = 0.5$, $\lambda_1 = 0.5$ across all models.

\begin{table*}[h]
\centering
\caption{Hyperparameter settings ($\alpha$, $\lambda$) used for each model and pruning ratio. $\alpha$ is the diversity weight and $\lambda$ is the coverage weight.}
\label{tab:hyperparams_model}
\setlength{\tabcolsep}{8pt}
\begin{tabular}{lcccc}
\toprule
\textbf{Pruning Ratio} &
  \textbf{LLaVA-1.5-7B} &
  \textbf{LLaVA-1.5-13B} &
  \textbf{LLaVA-NeXT-7B} &
  \textbf{LLaVA-NeXT-13B} \\
\midrule
77.8\% & (0.5,\ 0.5) & (0.5,\ 0.4) & (0.5,\ 0.5) & (0.5,\ 0.5) \\
88.9\% & (0.5,\ 1.0) & (0.5,\ 0.4) & (0.5,\ 0.4) & (0.5,\ 0.5) \\
94.4\% & (0.5,\ 1.0) & (0.5,\ 0.4) & (0.5,\ 0.5) & (0.5,\ 0.5) \\
\bottomrule
\end{tabular}
\end{table*}

\vspace{-2mm}

\begin{table*}[!t]
\centering
\caption{Hyperparameter settings ($\alpha$, $\lambda$) for Qwen2.5-VL-7B, InternVL3-8B, and LLaVA-Video.}
\label{tab:hyperparams_advanced}
\vspace{-2mm}

\setlength{\tabcolsep}{3pt}
\renewcommand{\arraystretch}{0.88}

\resizebox{0.78\textwidth}{!}{%
\begin{tabular}{@{}ccc@{}}
\begin{tabular}{@{}lc@{}}
\multicolumn{2}{c}{\textbf{Qwen2.5-VL-7B}} \\[2pt]
\toprule
\textbf{Pruning Ratio} & ($\alpha$, $\lambda$) \\
\midrule
80.2\% & (0.5, 0.4) \\
90.1\% & (0.5, 0.4) \\
\bottomrule
\end{tabular}
&
\begin{tabular}{@{}lc@{}}
\multicolumn{2}{c}{\textbf{InternVL3-8B}} \\[2pt]
\toprule
\textbf{Pruning Ratio} & ($\alpha$, $\lambda$) \\
\midrule
80.0\% & (0.2, 0.2) \\
90.0\% & (0.3, 0.2) \\
\bottomrule
\end{tabular}
&
\begin{tabular}{@{}lc@{}}
\multicolumn{2}{c}{\textbf{LLaVA-Video-7B}} \\[2pt]
\toprule
\textbf{Pruning Ratio} & ($\alpha$, $\lambda$) \\
\midrule
62.1\% & (0.5, 0.5) \\
81.1\% & (0.5, 0.5) \\
90.5\% & (0.5, 0.5) \\
\bottomrule
\end{tabular}
\end{tabular}
}

\vspace{-3mm}
\end{table*}

\subsection{Pruning Schedule}
\label{sec:appendix_implementation_schedule}

For all models, Stage~I applies TOPS immediately after the multimodal projector to reduce the initial token count before entering the LLM; Stage~II then applies two successive TOPS passes at designated LLM layers to reach the final budget.

\begin{table*}[h]
\centering
\caption{TOPS pruning schedule for LLaVA-1.5 and LLaVA-NeXT at three token budgets (T). Stage~I reduces visual tokens before the LLM to $2T$; Stage~II applies two successive TOPS passes at designated LLM layers to reach the final token count.}
\label{tab:pruning_schedule}
\setlength{\tabcolsep}{5pt}
\small
\begin{tabular}{l c c c c}
\toprule
\textbf{Model} & \textbf{Target} $T$ & \textbf{Stage~I} & \textbf{Stage~II Layers} & \textbf{Stage~II Budgets} \\
\midrule
\multirow{3}{*}{LLaVA-1.5-7B}
  & 128 & $576 \!\to\! 256$ & (L12,\ L24) & $(256\!\to\!128,\ 128\!\to\!32)$ \\
  & 64  & $576 \!\to\! 128$ & (L12,\ L24) & $(128\!\to\!64,\;\;\, 64\!\to\!16)$  \\
  & 32  & $576 \!\to\! 64$  & (L12,\ L24) & $(64\!\to\!32,\;\;\;\, 32\!\to\!8)$   \\
\midrule
\multirow{3}{*}{LLaVA-1.5-13B}
  & 128 & $576 \!\to\! 256$ & (L15,\ L30) & $(256\!\to\!128,\ 128\!\to\!32)$ \\
  & 64  & $576 \!\to\! 128$ & (L15,\ L30) & $(128\!\to\!64,\;\;\, 64\!\to\!16)$  \\
  & 32  & $576 \!\to\! 64$  & (L15,\ L30) & $(64\!\to\!32,\;\;\;\, 32\!\to\!8)$   \\
\midrule
\multirow{3}{*}{LLaVA-NeXT-7B}
  & 640 & $2880 \!\to\! 1280$ & (L12,\ L24) & $(1280\!\to\!640,\ 640\!\to\!160)$ \\
  & 320 & $2880 \!\to\! 640$  & (L12,\ L24) & $(640\!\to\!320,\ 320\!\to\!80)$   \\
  & 160 & $2880 \!\to\! 320$  & (L12,\ L24) & $(320\!\to\!160,\ 160\!\to\!40)$   \\
\midrule
\multirow{3}{*}{LLaVA-NeXT-13B}
  & 640 & $2880 \!\to\! 1280$ & (L15,\ L30) & $(1280\!\to\!640,\ 640\!\to\!160)$ \\
  & 320 & $2880 \!\to\! 640$  & (L15,\ L30) & $(640\!\to\!320,\ 320\!\to\!80)$   \\
  & 160 & $2880 \!\to\! 320$  & (L15,\ L30) & $(320\!\to\!160,\ 160\!\to\!40)$   \\
\bottomrule
\end{tabular}
\end{table*}

\vspace{-2mm}

\begin{table*}[h]
\centering
\caption{TOPS pruning schedule for Qwen2.5-VL-7B (initial: 1296 tokens) and InternVL3-8B (initial: 1280 tokens). Stage~I reduces tokens before the LLM; Stage~II applies two successive TOPS passes at designated LLM layers.}
\label{tab:pruning_schedule_advanced}
\setlength{\tabcolsep}{5pt}
\small
\begin{tabular}{l c c c c}
\toprule
\textbf{Model} & \textbf{Stage~I} & \textbf{Stage~II Layers} & \textbf{Stage~II Budgets} \\
\midrule
\multirow{3}{*}{Qwen2.5-VL-7B}
  & $1296 \!\to\! 512$ & (L12,\ L16) & $(512\!\to\!281,\ 281\!\to\!77)$  \\
  & $1296 \!\to\! 256$ & (L12,\ L16) & $(256\!\to\!139,\ 139\!\to\!39)$  \\
  & $1296 \!\to\! 128$ & (L12,\ L16) & $(128\!\to\!71,\;\;\, 71\!\to\!19)$   \\
\midrule
\multirow{3}{*}{InternVL3-8B}
  & $1280 \!\to\! 512$ & (L12,\ L16) & $(512\!\to\!281,\ 281\!\to\!77)$  \\
  & $1280 \!\to\! 256$ & (L12,\ L16) & $(256\!\to\!139,\ 139\!\to\!39)$  \\
  & $1280 \!\to\! 128$ & (L12,\ L16) & $(128\!\to\!71,\;\;\, 71\!\to\!19)$   \\
\bottomrule
\end{tabular}
\end{table*}

\subsection{Hardware and Evaluation Protocol}
\label{sec:appendix_implementation_hardware}

All experiments are conducted on NVIDIA A800-80GB GPUs. Inference is performed with batch size 1 to ensure fair latency comparison across methods. We follow the standard evaluation protocol for each benchmark, using greedy decoding without sampling for generative tasks.

\begin{algorithm}[H]
\caption{TOPS Stage~II --- In-LLM Layer-Wise Token Pruning}
\label{alg:stage2}
\scriptsize
\begin{algorithmic}[1]
\REQUIRE
\begin{tabular}[t]{@{}l@{}}
Hidden states $\{h_i^{(l)}\}$ at pruning layer $l$, \\
attention weights $A^{(l)}$, current visual tokens $V^{(l)}$, \\
target budget $M_l$, balance factors $\alpha_2,\lambda_2$
\end{tabular}
\ENSURE Updated visual token set $V^{(l+1)}$

\STATE
\begin{tabular}[t]{@{}l@{}}
Compute dynamic text rater set $\mathcal{Q}^{(l)}$ \\
via text-visual relevance thresholding
\end{tabular}

\STATE
\begin{tabular}[t]{@{}l@{}}
Compute text-guided relevance $r_i^{(l)}$ for all $i\in V^{(l)}$ \\
using $\mathcal{Q}^{(l)}$
\end{tabular}

\STATE
\begin{tabular}[t]{@{}l@{}}
Precompute similarity matrix $\mathbf{F}^{(l)}$ from \\
$\{h_i^{(l)}\}_{i\in V^{(l)}}$
\end{tabular}

\STATE
\begin{tabular}[t]{@{}l@{}}
$S \leftarrow \{i_0\}$ where $i_0=\arg\max_i r_i^{(l)}$ \\
\COMMENT{seed: highest task relevance}
\end{tabular}

\STATE
\begin{tabular}[t]{@{}l@{}}
Initialize $\mathrm{maxsim}_j \leftarrow \mathbf{F}^{(l)}[j,i_0]$ \\
for all $j\notin S$
\end{tabular}

\WHILE{$|S| < M_l$}
    \FOR{each token $i\in V^{(l)}\setminus S$}
        \STATE $\mathrm{div}_i^{(l)} \leftarrow 1-\max_{j\in S}\mathbf{F}^{(l)}[i,j]$

        \STATE
        \begin{tabular}[t]{@{}l@{}}
        $\mathrm{cov}_i^{(l)} \leftarrow \displaystyle\sum_{j\in V^{(l)}\setminus S}$ \\
        $\qquad\max\!\big(0,\,\mathbf{F}^{(l)}[i,j]
        -\max_{k\in S}\mathbf{F}^{(l)}[j,k]\big)$
        \end{tabular}
    \ENDFOR

    \STATE
    \begin{tabular}[t]{@{}l@{}}
    Normalize each criterion by its mean: \\
    $\tilde{r}_i^{(l)},\;\widetilde{\mathrm{div}}_i^{(l)},\;
    \widetilde{\mathrm{cov}}_i^{(l)}$
    \end{tabular}

    \STATE
    \begin{tabular}[t]{@{}l@{}}
    $\mathrm{score}_i^{(l)} \leftarrow
    \tilde{r}_i^{(l)}+\alpha_2\,\widetilde{\mathrm{div}}_i^{(l)}$ \\
    $\qquad+\lambda_2\,\widetilde{\mathrm{cov}}_i^{(l)}$
    \end{tabular}

    \STATE $i^\star \leftarrow \arg\max_{i\in V^{(l)}\setminus S}\mathrm{score}_i^{(l)}$
    \STATE $S \leftarrow S\cup\{i^\star\}$

    \STATE
    \begin{tabular}[t]{@{}l@{}}
    Update $\mathrm{maxsim}_j \leftarrow
    \max(\mathrm{maxsim}_j,\mathbf{F}^{(l)}[j,i^\star])$ \\
    for all $j\notin S$
    \end{tabular}
\ENDWHILE

\STATE
\begin{tabular}[t]{@{}l@{}}
Rebuild hidden sequence: \\
$H^{(l)} \leftarrow [H_{\text{sys}};\;\{h_i^{(l)}\}_{i\in S};\;H_{\text{text}}]$
\end{tabular}

\STATE
\begin{tabular}[t]{@{}l@{}}
Rebuild \texttt{attention\_mask}: set 1 for retained positions \\
and 0 for pruned visual positions
\end{tabular}

\STATE
\begin{tabular}[t]{@{}l@{}}
Rebuild \texttt{position\_ids}: re-index retained positions \\
contiguously from 0
\end{tabular}

\STATE $V^{(l+1)} \leftarrow S$
\STATE \textbf{return} $V^{(l+1)}$
\end{algorithmic}
\end{algorithm}

\clearpage

\begin{table*}[!t]
    \centering
    \centering
\begingroup
\setlength{\tabcolsep}{1.6pt}
\renewcommand{\arraystretch}{0.88}
\caption{Performance comparison of different pruning methods on LLaVA-1.5-7B. \textbf{Rel.} denotes the ratio of pruned accuracy to baseline accuracy. Red: attention-based; Green: attention\&diversity; Blue: diversity-based; Cyan: coverage-based; Purple: ours.}
\label{tab:pruning_comparison_7b}
\vspace{-1mm}
\resizebox{0.97\textwidth}{!}{%
\begin{tabular}{l|cccccccc|cc}
\toprule
\textbf{Method} & \textbf{GQA} & \textbf{SQA}$^{IMG}$ & \textbf{VQA}$^{\text{Text}}$ & \textbf{POPE} & \textbf{MME} & \textbf{MMB}$^{\text{EN}}$ & \textbf{MMB}$^{\text{CN}}$ & \textbf{MMVet} & \textbf{Acc.} & \textbf{Rel.} \\
\midrule
\multicolumn{11}{c}{\textit{Upper Bound: All 576 tokens (100\%)}} \\
\midrule
Baseline & 61.9 & 69.5 & 58.2 & 85.9 & 1506.5 & 64.7 & 58.1 & 31.3 & 63.1 & 100.0\% \\
\midrule
\multicolumn{11}{c}{\textit{Retain 128 Tokens (\(\downarrow\) 77.8\%)}} \\
\midrule
\rowcolor[rgb]{1.0,0.88,0.88}
FastV (ECCV24) & 54.0 & \underline{69.2} & 56.4 & 68.2 & 1368.9 & \textbf{63.0} & 55.9 & 27.0 & 57.8 & 91.6\% \\
\rowcolor[rgb]{1.0,0.88,0.88}
PDrop (CVPR25) & 57.1 & \textbf{70.1} & 56.7 & 77.5 & \underline{1444.1} & 62.3 & 55.3 & 27.6 & 59.9 & 94.9\% \\
\rowcolor[rgb]{1.0,0.88,0.88}
SparseVLM (ICML25) & 57.3 & 69.0 & 56.3 & 83.1 & 1399.3 & 62.6 & 56.9 & 29.7 & 60.6 & 96.0\% \\
\rowcolor[rgb]{0.88,0.96,0.88}
PruMerge+ (ICCV25) & 58.2 & 69.1 & 54.0 & 83.1 & 1408.1 & 61.8 & 55.8 & 30.4 & 60.4 & 95.7\% \\
\rowcolor[rgb]{0.88,0.96,0.88}
TRIM (COLING25) & 58.4 & 68.6 & 52.2 & 85.3 & 1413.4 & \textbf{63.0} & 52.3 & 29.9 & 60.1 & 95.2\% \\
\rowcolor[rgb]{0.88,0.96,0.88}
VisionZip (CVPR25) & 57.6 & 68.7 & 56.9 & 83.3 & 1436.9 & 62.1 & 57.0 & \textbf{31.6} & 61.1 & 96.8\% \\
\rowcolor[rgb]{0.88,0.93,1.0}
DART (EMNLP25) & 57.9 & 69.1 & 56.3 & 80.4 & 1408.7 & 60.7 & \textbf{57.3} & 30.9 & 60.4 & 95.7\% \\
\rowcolor[rgb]{0.88,0.93,1.0}
DivPrune (CVPR25) & \underline{59.4} & 68.6 & 55.9 & \textbf{87.0} & 1405.1 & 61.5 & 54.8 & 30.6 & 61.0 & 96.7\% \\
\rowcolor[rgb]{0.88,0.97,0.97}
SCOPE (NeurIPS25) & \underline{59.4} & 68.5 & \textbf{57.1} & 85.9 & 1440.5 & \underline{62.7} & 57.0 & \underline{31.3} & \underline{61.7} & \underline{97.8\%} \\
\rowcolor[rgb]{0.93,0.88,1.0}
\textbf{TOPS (Ours)} & \textbf{60.5} & 68.2 & \underline{57.0} & \underline{86.8} & \textbf{1482.7} & 62.5 & \underline{57.2} & 30.0 & \textbf{62.0} & \textbf{98.3\%} \\
\midrule
\multicolumn{11}{c}{\textit{Retain 64 Tokens (\(\downarrow\) 88.9\%)}} \\
\midrule
\rowcolor[rgb]{1.0,0.88,0.88}
FastV (ECCV24) & 46.0 & \textbf{70.1} & 51.6 & 35.5 & 973.5 & 50.1 & 42.1 & 18.9 & 45.4 & 71.9\% \\
\rowcolor[rgb]{1.0,0.88,0.88}
PDrop (CVPR25) & 46.1 & 68.8 & 49.2 & 40.8 & 982.2 & 48.0 & 36.6 & 17.7 & 44.5 & 70.5\% \\
\rowcolor[rgb]{1.0,0.88,0.88}
SparseVLM (ICML25) & 52.0 & 69.2 & 52.1 & 69.7 & 1190.4 & 58.3 & 49.6 & 24.4 & 54.4 & 86.2\% \\
\rowcolor[rgb]{0.88,0.96,0.88}
PruMerge+ (ICCV25) & 55.4 & \underline{69.5} & 52.0 & 75.7 & 1316.8 & 59.6 & 52.1 & 28.0 & 57.3 & 90.8\% \\
\rowcolor[rgb]{0.88,0.96,0.88}
TRIM (COLING25) & 56.6 & 69.0 & 49.7 & \underline{85.9} & 1350.9 & \underline{60.9} & 48.2 & 24.8 & 57.8 & 91.6\% \\
\rowcolor[rgb]{0.88,0.96,0.88}
VisionZip (CVPR25) & 55.1 & 69.0 & 55.5 & 77.0 & 1365.2 & 60.1 & 55.4 & 29.4 & 58.7 & 93.0\% \\
\rowcolor[rgb]{0.88,0.93,1.0}
DART (EMNLP25) & 54.7 & 69.3 & 54.7 & 73.8 & 1365.1 & 59.5 & 54.0 & 26.5 & 57.6 & 91.3\% \\
\rowcolor[rgb]{0.88,0.93,1.0}
DivPrune (CVPR25) & 57.5 & 68.0 & 54.5 & 85.5 & 1334.7 & 60.1 & 52.3 & 28.1 & 59.1 & 93.7\% \\
\rowcolor[rgb]{0.88,0.97,0.97}
SCOPE (NeurIPS25) & \underline{58.3} & 68.7 & \textbf{56.5} & 84.1 & \underline{1399.6} & \textbf{61.0} & \underline{56.0} & \underline{30.5} & \underline{60.6} & \underline{96.0\%} \\
\rowcolor[rgb]{0.93,0.88,1.0}
\textbf{TOPS (Ours)} & \textbf{58.7} & 68.6 & \underline{56.2} & \textbf{86.5} & \textbf{1442.7} & \underline{60.9} & \textbf{56.5} & \textbf{30.6} & \textbf{61.3} & \textbf{97.1\%} \\
\midrule
\multicolumn{11}{c}{\textit{Retain 32 Tokens (\(\downarrow\) 94.4\%)}} \\
\midrule
\rowcolor[rgb]{0.88,0.96,0.88}
PruMerge+ (ICCV25) & 52.9 & 67.9 & 49.2 & 66.7 & 1236.6 & 55.1 & 45.9 & 24.7 & 53.0 & 84.0\% \\
\rowcolor[rgb]{0.88,0.96,0.88}
TRIM (COLING25) & 54.5 & 68.1 & 47.6 & \textbf{84.9} & 1251.8 & 57.7 & 40.1 & 20.5 & 54.5 & 86.4\% \\
\rowcolor[rgb]{0.88,0.96,0.88}
VisionZip (CVPR25) & 51.8 & 69.1 & 53.1 & 69.4 & 1251.2 & 57.0 & 50.3 & 25.3 & 54.8 & 86.8\% \\
\rowcolor[rgb]{0.88,0.93,1.0}
DART (EMNLP25) & 52.9 & \underline{69.3} & 52.2 & 69.1 & 1273.3 & 58.5 & 50.0 & 25.0 & 55.1 & 87.3\% \\
\rowcolor[rgb]{0.88,0.93,1.0}
DivPrune (CVPR25) & 54.9 & 68.6 & 52.9 & 81.5 & 1284.9 & 57.6 & 49.1 & 26.3 & 56.9 & 90.2\% \\
\rowcolor[rgb]{0.88,0.97,0.97}
SCOPE (NeurIPS25) & \underline{56.2} & \textbf{69.4} & \underline{54.8} & 80.2 & \underline{1371.6} & \textbf{60.7} & \underline{52.5} & \textbf{29.8} & \underline{59.0} & \underline{93.5\%} \\
\rowcolor[rgb]{0.93,0.88,1.0}
\textbf{TOPS (Ours)} & \textbf{56.7} & 68.8 & \textbf{54.9} & \underline{83.5} & \textbf{1384.7} & \underline{59.5} & \textbf{55.1} & \underline{29.7} & \textbf{59.7} & \textbf{94.6\%} \\
\bottomrule
\end{tabular}
}
\endgroup
    \vspace{-3mm}
\end{table*}

\section{Experiments on More MLLMs}
\label{sec:appendix_13b}

To verify that TOPS generalizes across model scales, we additionally evaluate it on LLaVA-1.5-7B/13B and LLaVA-NeXT-7B/13B. As shown in Tables~\ref{tab:pruning_comparison_7b}--\ref{tab:pruning_comparison_next13b}, TOPS consistently outperforms all baselines across three compression levels.

\begin{table*}[!t]
    \centering
    \centering
\setlength{\tabcolsep}{2pt}
\caption{Performance comparison of different pruning methods on LLaVA-NeXT-7B. \textbf{Rel.} represents the ratio of pruned model's Acc. to the baseline's Acc. \colorbox[rgb]{1.0,0.88,0.88}{Red}: attention-based. \colorbox[rgb]{0.88,0.96,0.88}{Green}: attention\&diversity. \colorbox[rgb]{0.88,0.93,1.0}{Blue}: diversity-based. \colorbox[rgb]{0.88,0.97,0.97}{Cyan}: coverage-based. \colorbox[rgb]{0.93,0.88,1.0}{Purple}: ours.}
\label{tab:pruning_comparison_next}
\resizebox{\textwidth}{!}{%
\begin{tabular}{l|cccccccc|cc}
\toprule
\textbf{Method} & \textbf{GQA} & \textbf{SQA}$^{IMG}$ & \textbf{VQA}$^{\text{Text}}$ & \textbf{POPE} & \textbf{MME} & \textbf{MMB}$^{\text{EN}}$ & \textbf{MMB}$^{\text{CN}}$ & \textbf{MMVet} & \textbf{Acc.} & \textbf{Rel.} \\
\midrule
\multicolumn{11}{c}{\textit{Upper Bound: All 2880 tokens (100\%)}} \\
\midrule
Baseline & 62.5 & 67.5 & 60.3 & 86.8 & 1511.8 & 65.8 & 57.3 & 40.0 & 64.5 & 100.0\% \\
\midrule
\multicolumn{11}{c}{\textit{Retain 640 Tokens (\(\downarrow\) 77.8\%)}} \\
\midrule
\rowcolor[rgb]{1.0,0.88,0.88}
FastV (ECCV24) & 58.9 & 67.4 & 58.1 & 79.5 & 1412.6 & 63.1 & 53.5 & \underline{39.5} & 61.3 & 95.0\% \\
\rowcolor[rgb]{1.0,0.88,0.88}
PDrop (CVPR25) & 60.0 & 66.7 & 57.8 & 83.8 & 1475.9 & 64.1 & 55.2 & 36.7 & 62.3 & 96.6\% \\
\rowcolor[rgb]{1.0,0.88,0.88}
SparseVLM (ICML25) & 61.2 & 67.6 & 59.7 & 85.3 & 1456.8 & 65.9 & \underline{58.6} & 36.1 & 63.4 & 98.3\% \\
\rowcolor[rgb]{0.88,0.96,0.88}
PruMerge+ (ICCV25) & 60.8 & 67.8 & 54.9 & 85.3 & 1480.2 & 64.6 & 57.3 & 32.7 & 62.2 & 96.4\% \\
\rowcolor[rgb]{0.88,0.96,0.88}
TRIM (COLING25) & \textbf{62.1} & 66.9 & 54.8 & \underline{86.9} & 1471.8 & \textbf{66.8} & 55.8 & 37.8 & 63.1 & 97.8\% \\
\rowcolor[rgb]{0.88,0.96,0.88}
VisionZip (CVPR25) & 61.2 & \underline{68.1} & 59.9 & 86.0 & \underline{1493.4} & 65.8 & 58.1 & 38.9 & 64.1 & 99.4\% \\
\rowcolor[rgb]{0.88,0.93,1.0}
DART (EMNLP25) & 61.3 & 68.0 & 59.5 & 85.0 & 1450.2 & 64.9 & 57.1 & 36.9 & 63.2 & 98.0\% \\
\rowcolor[rgb]{0.88,0.93,1.0}
DivPrune (CVPR25) & 61.9 & 67.8 & 57.0 & \underline{86.9} & 1469.7 & 65.8 & 57.3 & 38.0 & 63.5 & 98.4\% \\
\rowcolor[rgb]{0.88,0.97,0.97}
SCOPE (NeurIPS25) & \underline{62.0} & 68.0 & \textbf{60.1} & 86.7 & 1485.1 & \underline{66.2} & 58.2 & \textbf{39.7} & \underline{64.4} & \underline{99.8\%} \\
\rowcolor[rgb]{0.93,0.88,1.0}
\textbf{TOPS (Ours)} & \underline{62.0} & \textbf{69.3} & \underline{60.0} & \textbf{87.6} & \textbf{1527.6} & 65.7 & \textbf{59.0} & 36.4 & \textbf{64.5} & \textbf{100.0\%} \\
\midrule
\multicolumn{11}{c}{\textit{Retain 320 Tokens (\(\downarrow\) 88.9\%)}} \\
\midrule
\rowcolor[rgb]{1.0,0.88,0.88}
FastV (ECCV24) & 49.8 & 66.6 & 52.2 & 49.5 & 1099.0 & 53.4 & 42.5 & 20.0 & 48.6 & 75.3\% \\
\rowcolor[rgb]{1.0,0.88,0.88}
PDrop (CVPR25) & 50.4 & 66.7 & 49.0 & 60.8 & 1171.5 & 55.5 & 44.7 & 24.0 & 51.2 & 79.4\% \\
\rowcolor[rgb]{1.0,0.88,0.88}
SparseVLM (ICML25) & 57.9 & 67.2 & 56.5 & 76.9 & 1386.1 & 63.1 & 56.7 & 32.8 & 60.1 & 93.2\% \\
\rowcolor[rgb]{0.88,0.96,0.88}
PruMerge+ (ICCV25) & 58.8 & \underline{68.1} & 54.0 & 79.5 & 1444.3 & 63.0 & 55.6 & 31.4 & 60.3 & 93.5\% \\
\rowcolor[rgb]{0.88,0.96,0.88}
TRIM (COLING25) & 59.9 & 66.2 & 50.2 & \textbf{86.5} & 1443.8 & 63.5 & 51.0 & 32.7 & 60.3 & 93.5\% \\
\rowcolor[rgb]{0.88,0.96,0.88}
VisionZip (CVPR25) & 58.9 & 67.5 & \textbf{58.8} & 82.3 & 1397.1 & 63.3 & 55.6 & 35.8 & 61.5 & 95.3\% \\
\rowcolor[rgb]{0.88,0.93,1.0}
DART (EMNLP25) & 59.5 & 67.5 & 57.6 & 81.0 & 1419.5 & 64.2 & 55.7 & 35.7 & 61.5 & 95.3\% \\
\rowcolor[rgb]{0.88,0.93,1.0}
DivPrune (CVPR25) & \underline{61.1} & 67.7 & 56.2 & 84.7 & 1423.3 & 63.9 & 55.7 & 34.8 & 61.9 & 96.0\% \\
\rowcolor[rgb]{0.88,0.97,0.97}
SCOPE (NeurIPS25) & 60.9 & 68.0 & 58.3 & 85.0 & \underline{1477.0} & \underline{65.0} & \underline{57.6} & \underline{36.1} & \underline{63.1} & \underline{97.8\%} \\
\rowcolor[rgb]{0.93,0.88,1.0}
\textbf{TOPS (Ours)} & \textbf{61.2} & \textbf{68.5} & \underline{58.4} & \underline{86.3} & \textbf{1500.1} & \textbf{65.8} & \textbf{58.4} & \textbf{37.4} & \textbf{63.9} & \textbf{99.1\%} \\
\midrule
\multicolumn{11}{c}{\textit{Retain 160 Tokens (\(\downarrow\) 94.4\%)}} \\
\midrule
\rowcolor[rgb]{0.88,0.96,0.88}
PruMerge+ (ICCV25) & 56.2 & 66.9 & 50.3 & 71.1 & 1289.6 & 58.0 & 48.9 & 29.3 & 55.6 & 86.2\% \\
\rowcolor[rgb]{0.88,0.96,0.88}
TRIM (COLING25) & 57.4 & 65.5 & 45.8 & \textbf{84.8} & 1275.8 & 61.6 & 45.2 & 29.6 & 56.7 & 87.9\% \\
\rowcolor[rgb]{0.88,0.96,0.88}
VisionZip (CVPR25) & 55.2 & \textbf{67.9} & 55.0 & 74.9 & 1327.8 & 58.6 & 50.4 & 32.3 & 57.6 & 89.3\% \\
\rowcolor[rgb]{0.88,0.93,1.0}
DART (EMNLP25) & 56.8 & \underline{67.8} & 54.9 & 75.3 & 1325.4 & 62.0 & 53.6 & 32.2 & 58.6 & 90.9\% \\
\rowcolor[rgb]{0.88,0.93,1.0}
DivPrune (CVPR25) & 59.3 & 67.1 & 54.1 & 80.0 & 1356.6 & 62.9 & 53.7 & 32.0 & 59.6 & 92.4\% \\
\rowcolor[rgb]{0.88,0.97,0.97}
SCOPE (NeurIPS25) & \textbf{59.8} & 67.1 & \textbf{56.8} & 81.3 & \underline{1402.2} & \underline{63.3} & \underline{56.4} & \underline{32.4} & \underline{60.9} & \underline{94.4\%} \\
\rowcolor[rgb]{0.93,0.88,1.0}
\textbf{TOPS (Ours)} & \underline{59.6} & 67.7 & \underline{56.7} & \underline{83.6} & \textbf{1446.8} & \textbf{64.2} & \textbf{57.1} & \textbf{36.5} & \textbf{62.2} & \textbf{96.4\%} \\
\bottomrule
\end{tabular}
}

%
%

    \vspace{-3mm}
\end{table*}

\begin{table*}[!t]
    \centering
    \centering
\setlength{\tabcolsep}{2pt}
\caption{Performance comparison of different pruning methods on LLaVA-1.5-13B. \textbf{Rel.} represents the ratio of pruned model's Acc. to the baseline's Acc. \colorbox[rgb]{1.0,0.88,0.88}{Red}: attention-based. \colorbox[rgb]{0.88,0.96,0.88}{Green}: attention\&diversity. \colorbox[rgb]{0.88,0.93,1.0}{Blue}: diversity-based. \colorbox[rgb]{0.88,0.97,0.97}{Cyan}: coverage-based. \colorbox[rgb]{0.93,0.88,1.0}{Purple}: ours.}
\label{tab:pruning_comparison_13b}
\resizebox{\textwidth}{!}{%
\begin{tabular}{l|cccccccc|cc}
\toprule
\textbf{Method} & \textbf{GQA} & \textbf{SQA}$^{IMG}$ & \textbf{VQA}$^{\text{Text}}$ & \textbf{POPE} & \textbf{MME} & \textbf{MMB}$^{\text{EN}}$ & \textbf{MMB}$^{\text{CN}}$ & \textbf{MMVet} & \textbf{Acc.} & \textbf{Rel.} \\
\midrule
\multicolumn{11}{c}{\textit{Upper Bound: All 576 tokens (100\%)}} \\
\midrule
Baseline & 63.3 & 72.8 & 61.2 & 86.0 & 1531.2 & 68.5 & 63.5 & 36.2 & 66.0 & 100.0\% \\
\midrule
\multicolumn{11}{c}{\textit{Retain 128 Tokens (\(\downarrow\) 77.8\%)}} \\
\midrule
\rowcolor[rgb]{1.0,0.88,0.88}
FastV (ECCV24) & 58.3 & \underline{74.2} & 58.6 & 75.5 & 1460.6 & 66.1 & 62.3 & 32.8 & 62.6 & 94.8\% \\
\rowcolor[rgb]{1.0,0.88,0.88}
PDrop (CVPR25) & \textbf{61.0} & 73.3 & \textbf{60.2} & 83.6 & \underline{1489.5} & \underline{67.5} & \underline{62.8} & 35.1 & 64.4 & 97.6\% \\
\rowcolor[rgb]{1.0,0.88,0.88}
SparseVLM (ICML25) & 59.6 & \textbf{74.3} & 59.3 & 85.0 & 1487.9 & \textbf{68.4} & 62.6 & 35.2 & \underline{64.8} & \underline{98.2\%} \\
\rowcolor[rgb]{0.88,0.96,0.88}
PruMerge+ (ICCV25) & 58.3 & 73.3 & 56.1 & 82.7 & 1445.9 & 66.3 & 61.2 & 33.6 & 63.0 & 95.4\% \\
\rowcolor[rgb]{0.88,0.96,0.88}
TRIM (COLING25) & 59.4 & 72.4 & 55.0 & \textbf{86.8} & 1426.9 & 67.1 & 58.4 & 35.1 & 63.2 & 95.7\% \\
\rowcolor[rgb]{0.88,0.96,0.88}
VisionZip (CVPR25) & 57.9 & 73.8 & 58.9 & 82.7 & 1449.2 & 67.4 & 62.5 & 36.0 & 64.0 & 96.9\% \\
\rowcolor[rgb]{0.88,0.93,1.0}
DART (EMNLP25) & 57.7 & \underline{74.2} & 58.7 & 80.4 & 1395.0 & 65.4 & 62.2 & 34.8 & 62.9 & 95.3\% \\
\rowcolor[rgb]{0.88,0.93,1.0}
DivPrune (CVPR25) & 59.2 & 72.8 & 58.0 & \textbf{86.8} & 1457.7 & 66.3 & 60.7 & 34.4 & 63.9 & 96.8\% \\
\rowcolor[rgb]{0.88,0.97,0.97}
SCOPE (NeurIPS25) & 59.1 & 73.8 & 58.9 & 86.0 & 1440.2 & 66.7 & \textbf{62.9} & \underline{36.6} & 64.5 & 97.7\% \\
\rowcolor[rgb]{0.93,0.88,1.0}
\textbf{TOPS (Ours)} & \underline{60.2} & 72.9 & \underline{59.4} & \underline{86.6} & \textbf{1522.8} & 66.6 & 62.4 & \textbf{38.1} & \textbf{65.3} & \textbf{98.9\%} \\
\midrule
\multicolumn{11}{c}{\textit{Retain 64 Tokens (\(\downarrow\) 88.9\%)}} \\
\midrule
\rowcolor[rgb]{1.0,0.88,0.88}
FastV (ECCV24) & 51.9 & 73.1 & 53.4 & 56.9 & 1246.4 & 59.2 & 55.1 & 26.9 & 54.9 & 83.2\% \\
\rowcolor[rgb]{1.0,0.88,0.88}
PDrop (CVPR25) & 54.1 & 73.1 & 55.3 & 66.1 & 1247.0 & 63.1 & 56.6 & 21.9 & 56.6 & 85.8\% \\
\rowcolor[rgb]{1.0,0.88,0.88}
SparseVLM (ICML25) & 55.9 & 73.0 & 57.1 & 77.9 & 1374.3 & 65.2 & 60.3 & 32.9 & 61.4 & 93.0\% \\
\rowcolor[rgb]{0.88,0.96,0.88}
PruMerge+ (ICCV25) & 56.3 & 73.5 & 54.4 & 75.7 & 1338.2 & 65.0 & 59.3 & 30.3 & 60.2 & 91.2\% \\
\rowcolor[rgb]{0.88,0.96,0.88}
TRIM (COLING25) & \underline{57.9} & 72.0 & 52.0 & \textbf{86.5} & 1406.2 & 65.0 & 52.7 & 27.8 & 60.5 & 91.7\% \\
\rowcolor[rgb]{0.88,0.96,0.88}
VisionZip (CVPR25) & 56.2 & \textbf{74.2} & \underline{57.4} & 75.7 & 1379.6 & 64.9 & 61.3 & 33.4 & 61.5 & 93.2\% \\
\rowcolor[rgb]{0.88,0.93,1.0}
DART (EMNLP25) & 55.7 & \underline{73.8} & \underline{57.4} & 72.8 & 1380.0 & 64.7 & 60.6 & 32.8 & 60.8 & 92.1\% \\
\rowcolor[rgb]{0.88,0.93,1.0}
DivPrune (CVPR25) & \underline{57.9} & 71.7 & 57.3 & 84.5 & \underline{1454.2} & 64.1 & 59.8 & 29.3 & 62.2 & 94.2\% \\
\rowcolor[rgb]{0.88,0.97,0.97}
SCOPE (NeurIPS25) & \textbf{58.6} & 73.6 & \textbf{58.2} & 83.1 & 1445.2 & \underline{65.9} & \textbf{62.6} & \underline{34.5} & \underline{63.6} & \underline{96.4\%} \\
\rowcolor[rgb]{0.93,0.88,1.0}
\textbf{TOPS (Ours)} & \textbf{58.6} & \underline{73.8} & 56.6 & \underline{85.4} & \textbf{1462.2} & \textbf{66.8} & \underline{62.3} & \textbf{37.0} & \textbf{64.2} & \textbf{97.3\%} \\
\midrule
\multicolumn{11}{c}{\textit{Retain 32 Tokens (\(\downarrow\) 94.4\%)}} \\
\midrule
\rowcolor[rgb]{0.88,0.96,0.88}
PruMerge+ (ICCV25) & 54.1 & 71.7 & 52.4 & 67.4 & 1269.1 & 61.1 & 53.5 & 28.7 & 56.5 & 85.6\% \\
\rowcolor[rgb]{0.88,0.96,0.88}
TRIM (COLING25) & 55.6 & 70.4 & 49.6 & \textbf{85.8} & 1284.7 & 63.1 & 45.4 & 26.4 & 57.6 & 87.3\% \\
\rowcolor[rgb]{0.88,0.96,0.88}
VisionZip (CVPR25) & 52.7 & 72.9 & 55.2 & 66.8 & 1257.7 & 61.2 & 55.8 & 29.3 & 57.1 & 86.5\% \\
\rowcolor[rgb]{0.88,0.93,1.0}
DART (EMNLP25) & 53.9 & \underline{73.2} & 55.1 & 66.9 & 1282.8 & 61.9 & 56.2 & 29.4 & 57.6 & 87.3\% \\
\rowcolor[rgb]{0.88,0.93,1.0}
DivPrune (CVPR25) & \underline{56.2} & 70.9 & 54.6 & \underline{79.3} & 1405.2 & 61.7 & 57.2 & 27.8 & 59.7 & 90.5\% \\
\rowcolor[rgb]{0.88,0.97,0.97}
SCOPE (NeurIPS25) & \textbf{57.2} & 72.4 & \underline{57.2} & 77.6 & \underline{1413.4} & \underline{63.5} & \underline{60.1} & \underline{34.0} & \underline{61.6} & \underline{93.3\%} \\
\rowcolor[rgb]{0.93,0.88,1.0}
\textbf{TOPS (Ours)} & 56.1 & \textbf{73.3} & \textbf{57.7} & 78.2 & \textbf{1442.2} & \textbf{65.2} & \textbf{61.9} & \textbf{35.1} & \textbf{62.5} & \textbf{94.7\%} \\
\bottomrule
\end{tabular}
}

%
%
%
    \vspace{-3mm}
\end{table*}

\begin{table*}[!t]
    \centering
    \centering
\setlength{\tabcolsep}{2pt}
\caption{Performance comparison of different pruning methods on LLaVA-NeXT-13B. \textbf{Rel.} represents the ratio of pruned model's Acc. to the baseline's Acc. \colorbox[rgb]{1.0,0.88,0.88}{Red}: attention-based. \colorbox[rgb]{0.88,0.96,0.88}{Green}: attention\&diversity. \colorbox[rgb]{0.88,0.93,1.0}{Blue}: diversity-based. \colorbox[rgb]{0.88,0.97,0.97}{Cyan}: coverage-based. \colorbox[rgb]{0.93,0.88,1.0}{Purple}: ours.}
\label{tab:pruning_comparison_next13b}
\resizebox{\textwidth}{!}{%
\begin{tabular}{l|cccccccc|cc}
\toprule
\textbf{Method} & \textbf{GQA} & \textbf{SQA}$^{IMG}$ & \textbf{VQA}$^{\text{Text}}$ & \textbf{POPE} & \textbf{MME} & \textbf{MMB}$^{\text{EN}}$ & \textbf{MMB}$^{\text{CN}}$ & \textbf{MMVet} & \textbf{Acc.} & \textbf{Rel.} \\
\midrule
\multicolumn{11}{c}{\textit{Upper Bound: All 2880 tokens (100\%)}} \\
\midrule
Baseline & 64.4 & 73.1 & 63.2 & 85.3 & 1539.5 & 68.5 & 61.2 & 45.0 & 67.2 & 100.0\% \\
\midrule
\multicolumn{11}{c}{\textit{Retain 640 Tokens (\(\downarrow\) 77.8\%)}} \\
\midrule
\rowcolor[rgb]{1.0,0.88,0.88}
FastV (ECCV24) & 60.9 & 71.7 & 60.7 & 80.2 & 1516.7 & 65.5 & 59.9 & 43.8 & 64.8 & 96.4\% \\
\rowcolor[rgb]{1.0,0.88,0.88}
PDrop (CVPR25) & 62.8 & 71.7 & 62.1 & 84.4 & 1559.1 & 66.6 & 60.8 & 39.7 & 65.8 & 97.9\% \\
\rowcolor[rgb]{1.0,0.88,0.88}
SparseVLM (ICML25) & 62.7 & \underline{72.5} & \textbf{62.8} & 85.6 & \underline{1562.7} & \underline{68.8} & \textbf{64.0} & 41.3 & 67.0 & 99.7\% \\
\rowcolor[rgb]{0.88,0.96,0.88}
PruMerge+ (ICCV25) & 62.8 & 70.6 & 56.2 & 83.7 & 1497.3 & 67.4 & 61.9 & 39.4 & 64.6 & 96.1\% \\
\rowcolor[rgb]{0.88,0.96,0.88}
TRIM (COLING25) & 63.1 & 71.2 & 57.6 & \textbf{87.3} & 1554.6 & 68.7 & 61.2 & 42.3 & 66.1 & 98.4\% \\
\rowcolor[rgb]{0.88,0.96,0.88}
VisionZip (CVPR25) & 62.9 & 70.8 & 62.1 & 85.8 & 1549.2 & 68.1 & 62.6 & \textbf{46.8} & \underline{67.1} & \underline{99.9\%} \\
\rowcolor[rgb]{0.88,0.93,1.0}
DART (EMNLP25) & 62.7 & 71.0 & 61.3 & 85.2 & 1542.4 & 67.6 & 61.9 & \underline{45.5} & 66.5 & 99.0\% \\
\rowcolor[rgb]{0.88,0.93,1.0}
DivPrune (CVPR25) & 63.5 & 72.2 & 59.2 & 86.5 & 1526.1 & 67.5 & 62.9 & 39.0 & 65.9 & 98.1\% \\
\rowcolor[rgb]{0.88,0.97,0.97}
SCOPE (NeurIPS25) & \underline{63.7} & 71.7 & 62.4 & 86.5 & \textbf{1573.1} & 67.6 & 63.2 & 40.9 & 66.8 & 99.4\% \\
\rowcolor[rgb]{0.93,0.88,1.0}
\textbf{TOPS (Ours)} & \textbf{64.1} & \textbf{72.8} & \underline{62.5} & \underline{86.7} & 1560.8 & \textbf{69.0} & \underline{63.4} & 44.4 & \textbf{67.6} & \textbf{100.6\%} \\
\midrule
\multicolumn{11}{c}{\textit{Retain 320 Tokens (\(\downarrow\) 88.9\%)}} \\
\midrule
\rowcolor[rgb]{1.0,0.88,0.88}
FastV (ECCV24) & 54.6 & 70.5 & 55.4 & 63.6 & 1279.0 & 59.8 & 54.4 & 30.2 & 56.6 & 84.2\% \\
\rowcolor[rgb]{1.0,0.88,0.88}
PDrop (CVPR25) & 57.7 & \underline{72.1} & 56.2 & 74.6 & 1386.3 & 62.8 & 55.3 & 29.5 & 59.7 & 88.8\% \\
\rowcolor[rgb]{1.0,0.88,0.88}
SparseVLM (ICML25) & 60.9 & 70.9 & 60.0 & 81.5 & 1491.6 & \textbf{68.0} & \textbf{63.5} & 39.3 & 64.8 & 96.4\% \\
\rowcolor[rgb]{0.88,0.96,0.88}
PruMerge+ (ICCV25) & 61.1 & 70.7 & 55.9 & 79.1 & 1426.5 & 66.6 & 60.6 & 36.5 & 62.7 & 93.3\% \\
\rowcolor[rgb]{0.88,0.96,0.88}
TRIM (COLING25) & 61.3 & 69.9 & 52.8 & \textbf{87.2} & 1476.6 & 67.3 & 57.4 & 33.1 & 62.9 & 93.6\% \\
\rowcolor[rgb]{0.88,0.96,0.88}
VisionZip (CVPR25) & 60.7 & 70.2 & 60.7 & 82.3 & 1487.3 & 66.5 & 62.3 & 41.1 & 64.8 & 96.4\% \\
\rowcolor[rgb]{0.88,0.93,1.0}
DART (EMNLP25) & 60.9 & 69.8 & 59.7 & 81.1 & 1457.4 & 65.9 & 61.9 & 41.4 & 64.2 & 95.5\% \\
\rowcolor[rgb]{0.88,0.93,1.0}
DivPrune (CVPR25) & 61.8 & \textbf{72.3} & 57.6 & 85.2 & 1473.0 & 65.9 & 61.9 & 39.2 & 64.7 & 96.3\% \\
\rowcolor[rgb]{0.88,0.97,0.97}
SCOPE (NeurIPS25) & \underline{62.7} & 71.0 & \underline{60.8} & 85.2 & \underline{1509.3} & 66.6 & \underline{63.1} & \textbf{42.8} & \underline{66.0} & \underline{98.2\%} \\
\rowcolor[rgb]{0.93,0.88,1.0}
\textbf{TOPS (Ours)} & \textbf{63.2} & 71.5 & \textbf{61.1} & \underline{85.9} & \textbf{1569.7} & \underline{67.6} & 63.0 & \underline{42.2} & \textbf{66.6} & \textbf{99.1\%} \\
\midrule
\multicolumn{11}{c}{\textit{Retain 160 Tokens (\(\downarrow\) 94.4\%)}} \\
\midrule
\rowcolor[rgb]{0.88,0.96,0.88}
PruMerge+ (ICCV25) & 57.9 & 70.1 & 52.8 & 72.1 & 1345.9 & 63.2 & 57.1 & 30.6 & 58.9 & 87.6\% \\
\rowcolor[rgb]{0.88,0.96,0.88}
TRIM (COLING25) & 58.9 & 69.1 & 49.2 & \textbf{87.0} & 1392.3 & 65.7 & 51.6 & 27.8 & 59.9 & 89.1\% \\
\rowcolor[rgb]{0.88,0.96,0.88}
VisionZip (CVPR25) & 57.8 & 69.7 & 58.6 & 76.8 & 1393.9 & 64.8 & 60.0 & 35.9 & 61.7 & 91.8\% \\
\rowcolor[rgb]{0.88,0.93,1.0}
DART (EMNLP25) & 58.7 & 70.1 & 57.2 & 75.7 & 1389.3 & 64.6 & 60.8 & 35.0 & 61.4 & 91.4\% \\
\rowcolor[rgb]{0.88,0.93,1.0}
DivPrune (CVPR25) & 60.0 & \textbf{71.4} & 56.3 & 81.9 & 1436.7 & 65.1 & 60.9 & 37.4 & 63.1 & 93.9\% \\
\rowcolor[rgb]{0.88,0.97,0.97}
SCOPE (NeurIPS25) & \textbf{61.2} & \underline{71.2} & \underline{59.2} & 82.7 & \underline{1473.7} & \underline{66.2} & \textbf{62.9} & \underline{37.8} & \underline{64.4} & \underline{95.8\%} \\
\rowcolor[rgb]{0.93,0.88,1.0}
\textbf{TOPS (Ours)} & \underline{61.1} & 70.6 & \textbf{59.7} & \underline{83.9} & \textbf{1480.9} & \textbf{66.6} & \underline{62.8} & \textbf{40.4} & \textbf{64.9} & \textbf{96.6\%} \\
\bottomrule
\end{tabular}
}

%
%
%
    \vspace{-3mm}
\end{table*}

\section{Additional Ablation Studies}
\label{sec:appendix_ablation}

\subsection{Hyperparameter Sensitivity}
\label{sec:appendix_hyperparams}

Table~\ref{tab:ablation_hyperparams} analyzes sensitivity to $\alpha$ (diversity weight) and $\lambda$ (coverage weight). Even a small coverage weight ($\lambda \approx 0.1$) provides consistent improvements, while a moderate diversity weight ($\alpha \approx 0.5$) yields the most stable results.

\begin{figure*}[t]
    \centering
    \includegraphics[width=\linewidth]{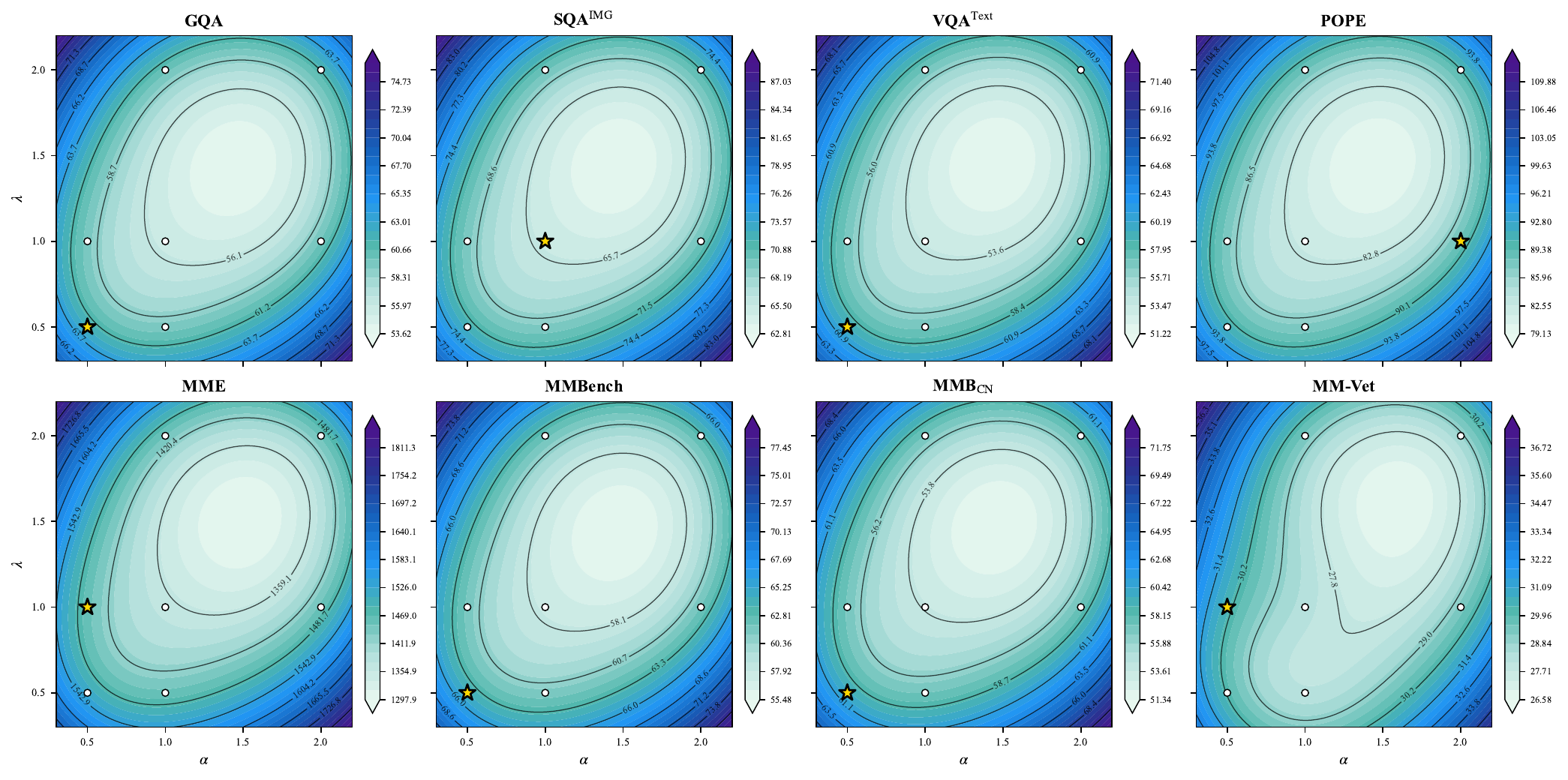}
    \caption{\textbf{Full hyperparameter sensitivity across all 8 benchmarks.} Contour plots of per-benchmark performance across seven $(\alpha, \lambda)$ configurations at 64 tokens on LLaVA-1.5-7B. \textbf{Star}: optimal configuration; \textbf{white dots}: other tested configurations.}
    \label{fig:appendix_hyperparams_full}
\end{figure*}

\begin{table*}[!t]
\centering
\begingroup
\small
\setlength{\tabcolsep}{3pt}
\renewcommand{\arraystretch}{0.90}
\caption{Ablation study of hyper-parameters $\alpha$ (diversity weight) and $\lambda$ (coverage weight) on LLaVA-1.5-7B. \textbf{Acc.} is averaged over benchmarks. \textbf{Bold} and \underline{underline} denote the best and second results per column within each group.}
\label{tab:ablation_hyperparams}
\vspace{-1mm}
\resizebox{0.98\textwidth}{!}{%
\begin{tabular}{cc|cccccccc|c}
\toprule
$\alpha$ & $\lambda$ & \textbf{GQA} & \textbf{SQA}$^{IMG}$ & \textbf{VQA}$^{\text{Text}}$ & \textbf{POPE} & \textbf{MME} & \textbf{MMB}$^{\text{EN}}$ & \textbf{MMB}$^{\text{CN}}$ & \textbf{MMVet} & \textbf{Acc.} \\
\midrule
\multicolumn{11}{c}{\textit{Retain 64 Tokens}} \\
\midrule
0.5 & 0.5 & \textbf{58.9} & \underline{68.6} & \textbf{56.4} & \underline{86.5} & \underline{1441.7} & \textbf{61.3} & \textbf{56.9} & 28.6 & \underline{61.2} \\
1   & 0.5 & 58.7 & 68.3 & \underline{56.3} & 86.2 & 1436.4 & 60.5 & \underline{56.6} & 26.6 & 60.6 \\
0.5 & 1   & 58.7 & \underline{68.6} & 56.2 & \underline{86.5} & \textbf{1442.7} & 60.9 & 56.5 & \textbf{30.6} & \textbf{61.3} \\
1   & 1   & 58.5 & \textbf{68.8} & 56.1 & \underline{86.5} & 1436.8 & 60.7 & 56.4 & \underline{30.5} & \underline{61.2} \\
2   & 1   & 58.6 & \underline{68.6} & 56.2 & \textbf{86.8} & 1428.2 & 60.7 & 56.5 & 29.1 & 61.0 \\
1   & 2   & \textbf{58.9} & \underline{68.6} & 55.8 & 86.4 & 1417.6 & \underline{61.0} & 56.0 & 28.6 & 60.8 \\
2   & 2   & \underline{58.8} & \underline{68.6} & 56.1 & 86.4 & 1390.5 & 60.5 & 55.8 & 27.4 & 60.4 \\
\midrule
\multicolumn{11}{c}{\textit{Retain 32 Tokens}} \\
\midrule
0.5 & 0.5 & 55.9 & \underline{69.1} & \textbf{55.5} & 81.6 & 1362.6 & \underline{60.2} & \textbf{55.5} & \underline{28.9} & 59.3 \\
1   & 0.5 & \underline{56.8} & 68.7 & \underline{55.2} & 83.4 & 1357.0 & \underline{60.2} & \underline{55.2} & 26.6 & 59.2 \\
0.5 & 1   & 56.7 & 68.8 & 55.1 & 83.5 & \underline{1384.7} & 59.5 & 55.1 & \textbf{29.7} & \textbf{59.7} \\
1   & 1   & \textbf{56.9} & \textbf{69.3} & 54.8 & 84.5 & 1361.8 & \underline{60.2} & 54.8 & 26.4 & 59.4 \\
2   & 1   & \underline{56.8} & 68.7 & 54.4 & \textbf{84.8} & 1370.3 & \textbf{60.5} & 53.6 & 27.4 & 59.3 \\
1   & 2   & \underline{56.8} & 68.7 & 54.0 & \underline{84.6} & \textbf{1388.9} & 59.5 & 54.0 & 28.4 & \underline{59.6} \\
2   & 2   & 55.6 & 69.0 & 54.4 & 81.7 & 1327.9 & 59.4 & 54.6 & 27.2 & 58.5 \\
\bottomrule
\end{tabular}}
\vspace{-3mm}
\endgroup
\end{table*}

\subsection{Effect of Dynamic Text Rater}
\label{sec:appendix_text_rater}

Table~\ref{tab:ablation_text_rater} compares different text rater strategies across three pruning ratios. The dynamic rater consistently outperforms \texttt{last\_token} and \texttt{all\_mean} by focusing on text tokens most engaged with visual information at each layer.

\begin{table}[H]
    \centering
    \scriptsize
\setlength{\tabcolsep}{3pt}

\caption{Ablation of text rater strategy.}
\label{tab:ablation_text_rater}
\vspace{-1pt}

\resizebox{\linewidth}{!}{%
\begin{tabular}{l ccc ccc}
\toprule
& \multicolumn{3}{c}{\textbf{Pruning 77.8\%}} 
& \multicolumn{3}{c}{\textbf{Pruning 88.9\%}} \\
\cmidrule(lr){2-4} \cmidrule(lr){5-7}
\textbf{Strategy} & MME & MMBench & GQA & MME & MMBench & GQA \\
\midrule
\texttt{all\_mean}    & \underline{1480.5} & \underline{62.4} & \underline{60.3} & \underline{1441.7} & \textbf{61.2} & \textbf{58.7} \\
\texttt{last\_token}  & \textbf{1488.9}    & \underline{62.4} & 60.2              & 1401.8             & \underline{61.0} & 58.4 \\
\midrule
\textbf{Ours}         & 1482.7             & \textbf{62.5}    & \textbf{60.5}    & \textbf{1442.7}    & 60.9 & \textbf{58.7} \\
\bottomrule
\end{tabular}
}
\end{table}

\subsection{Effect of Pruning Layers}
\label{sec:appendix_pruning_layers}

Table~\ref{tab:ablation_pruning_layers} varies the pruning layer positions. Middle layers $\{12, 24\}$ offer the best balance: early enough for computational savings, yet late enough for sufficient text--visual interaction.

\begin{table*}[!t]
\centering
\begingroup
\small
\setlength{\tabcolsep}{3pt}
\renewcommand{\arraystretch}{0.92}
\caption{Ablation of pruning layer configurations in Stage~2 for LLaVA-1.5-7B. All variants apply Stage~1 ($576{\to}256$) identically. Avg.\ is mean score across seven benchmarks; Rel.\ is relative to unpruned baseline (63.1).}
\label{tab:ablation_pruning_layers}
\vspace{-1mm}
\resizebox{0.98\textwidth}{!}{%
\begin{tabular}{lcccccccccc}
\toprule
\textbf{Stage-2 Layers} & \textbf{MME} & \textbf{MMB} & \textbf{MMB\textsubscript{CN}} & \textbf{SQA} & \textbf{MMVet} & \textbf{TVQA} & \textbf{GQA} & \textbf{POPE} & \textbf{Avg} & \textbf{Rel (\%)} \\
\midrule
L2 ($256{\to}119$)             
& 1421.1 & 60.5 & 54.8 & \textbf{68.4} & \ul{30.3} & \ul{57.0} & 58.9 & 85.9 & 60.9 & 96.5 \\

L10 ($256{\to}70$)             
& \ul{1472.0} & \ul{61.6} & \textbf{57.5} & \textbf{68.4} & 28.7 & 56.9 & \ul{60.0} & 86.0 & \ul{61.6} & \ul{97.6} \\

L2+L14 ($256{\to}128{\to}114$) 
& 1439.1 & 60.9 & 55.4 & 68.0 & \textbf{31.3} & \textbf{57.2} & 59.1 & \ul{86.3} & 61.3 & 97.1 \\

\textbf{L12+L24 (TOPS)} ($256{\to}64{\to}32$) 
& \textbf{1482.7} & \textbf{62.5} & \ul{57.2} & \ul{68.2} & 30.0 & \ul{57.0} & \textbf{60.5} & \textbf{86.8} & \textbf{62.0} & \textbf{98.3} \\
\bottomrule
\end{tabular}}
\vspace{-3mm}
\endgroup
\end{table*}

\section{Additional Empirical Study}
\label{sec:appendix_empirical}

\subsection{Logit Fidelity on an Additional Dataset}
\label{sec:appendix_empirical_logit}

To further validate the empirical observations reported in Section~\ref{sec:principle_analysis}, we extend the logit fidelity analysis to TextVQA. As shown in Figure~\ref{fig:loss_diff_comparison}, we measure $\Delta\mathcal{L} = \mathcal{L}_{\text{pruned}} - \mathcal{L}_{\text{vanilla}}$ across token budgets of 128, 64, and 32 on 200 TextVQA samples. The pattern closely mirrors that observed on MME: at low pruning ratios, relevance-based pruning (FastV) incurs smaller logit distortion; as the budget decreases, diversity and coverage methods exhibit lower degradation. Across all budgets, TOPS consistently achieves the smallest logit increase, confirming that the complementary advantage of combining all three principles generalizes across datasets.

\subsection{Cross-Layer Token Selection Instability}
\label{sec:appendix_cross_layer}

A key motivation for TOPS's multi-stage progressive pruning design is that token importance varies substantially across LLM layers. Figure~\ref{fig:cross_layer_jaccard_all} measures the mean Jaccard similarity between the top-$R{=}128$ token sets selected independently at each pair of layers, computed over 1000 POPE samples on LLaVA-1.5-7B. Near-zero off-diagonal similarities indicate that token selection is highly layer-dependent, motivating progressive multi-stage pruning rather than relying on a single fixed pruning layer.

\subsection{Token Selection Spatial Frequency}
\label{sec:appendix_tops_spatial}

Figure~\ref{fig:spatial_frequency} visualizes the spatial selection frequency heatmaps for representative baselines---FastV, DivPrune, DART, and SCOPE---averaged over 9000 POPE samples at a token budget of 128. FastV exhibits pronounced positional bias toward bottom rows due to attention shift in shallow LLM layers. Figure~\ref{fig:tops_stages_heatmap} further shows the per-token selection probability of TOPS across pruning stages, confirming that TOPS maintains spatially balanced token selection.

\begin{figure*}[p]
\centering
\captionsetup{font=small,skip=3pt}
\setlength{\abovecaptionskip}{3pt}
\setlength{\belowcaptionskip}{5pt}

\vspace*{-4mm}

\includegraphics[width=0.82\textwidth]{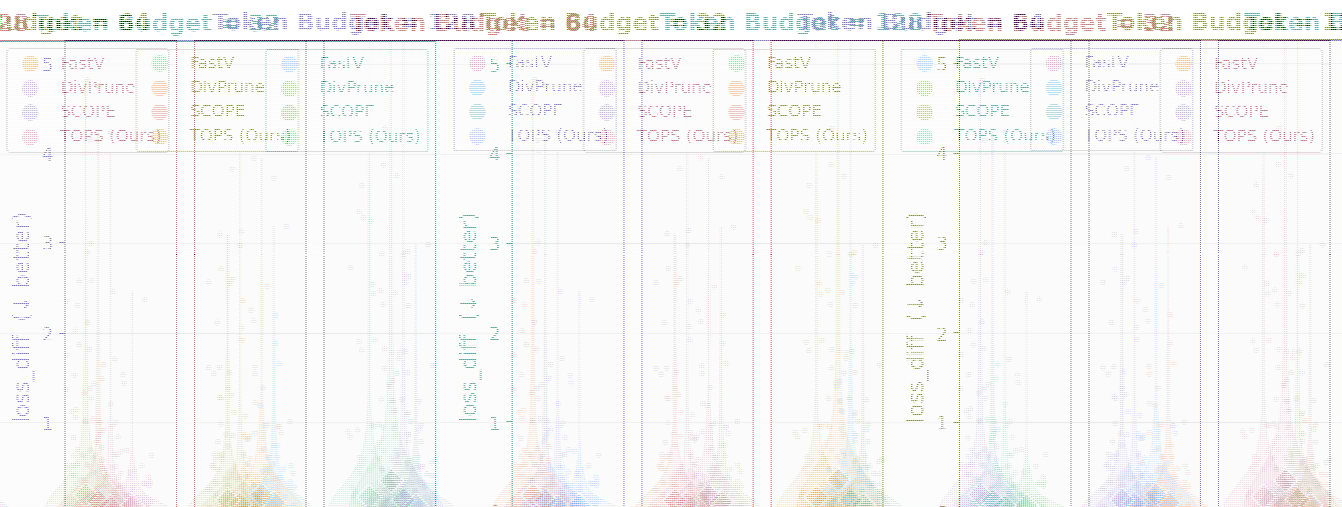}
\caption{Logit fidelity comparison across pruning methods and token budgets on 200 TextVQA samples. TOPS consistently achieves the smallest logit distortion across all budgets.}
\label{fig:loss_diff_comparison}

\vspace{5mm}

\begin{minipage}{0.92\textwidth}
\centering
\includegraphics[width=0.30\linewidth, trim={0 0 0 20pt}, clip]{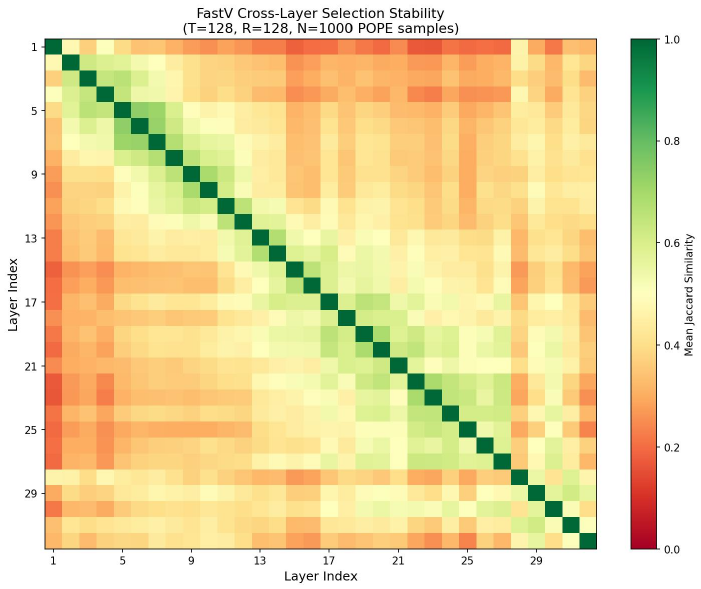}
\hfill
\includegraphics[width=0.30\linewidth, trim={0 0 0 20pt}, clip]{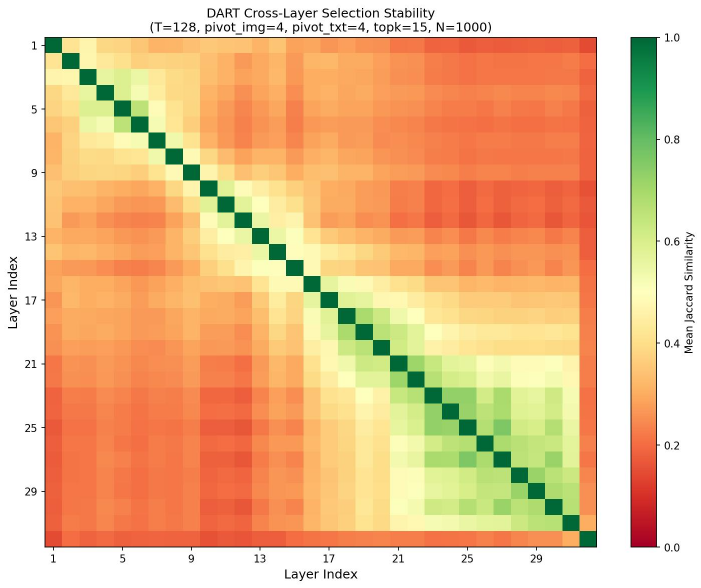}
\hfill
\includegraphics[width=0.30\linewidth, trim={0 0 0 20pt}, clip]{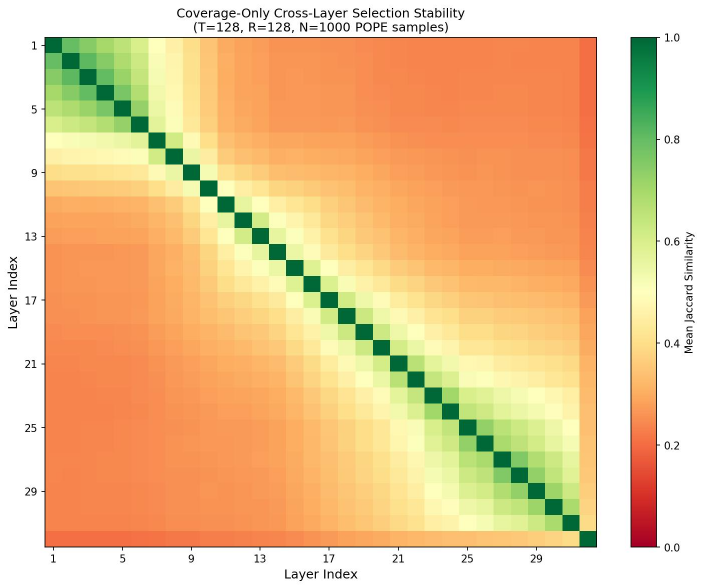}
\end{minipage}
\caption{Cross-layer token selection stability via mean Jaccard similarity ($R{=}128$, $N{=}1000$ POPE samples, LLaVA-1.5-7B). Left to right: attention-based, diversity-based, and coverage-based criteria. Near-zero off-diagonal values confirm that cross-layer inconsistency is universal, justifying TOPS's multi-stage design.}
\label{fig:cross_layer_jaccard_all}

\vspace{5mm}

\includegraphics[width=0.78\textwidth]{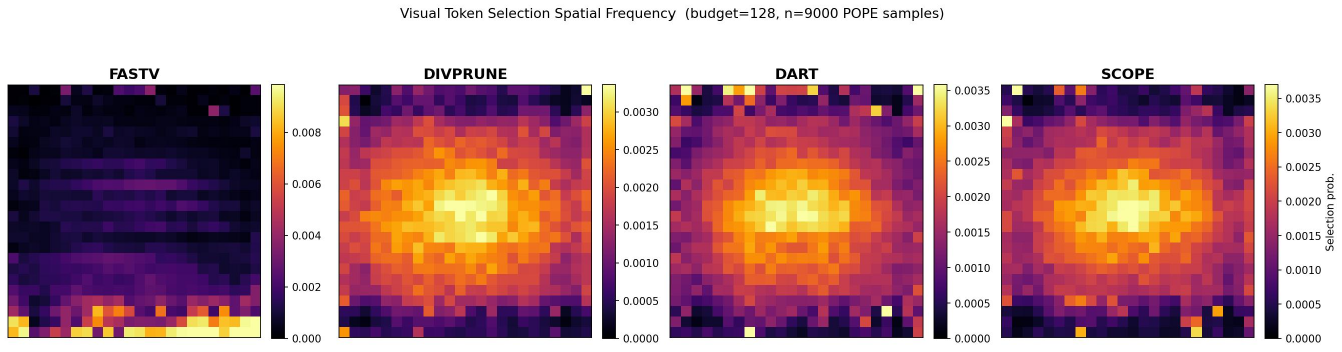}
\caption{Spatial selection frequency heatmaps for FastV, DivPrune, DART, and SCOPE ($9000$ POPE samples, budget$=128$). FastV shows strong positional bias toward bottom rows due to attention shift; other methods achieve roughly uniform spatial coverage.}
\label{fig:spatial_frequency}

\vspace{5mm}

\includegraphics[width=0.78\textwidth]{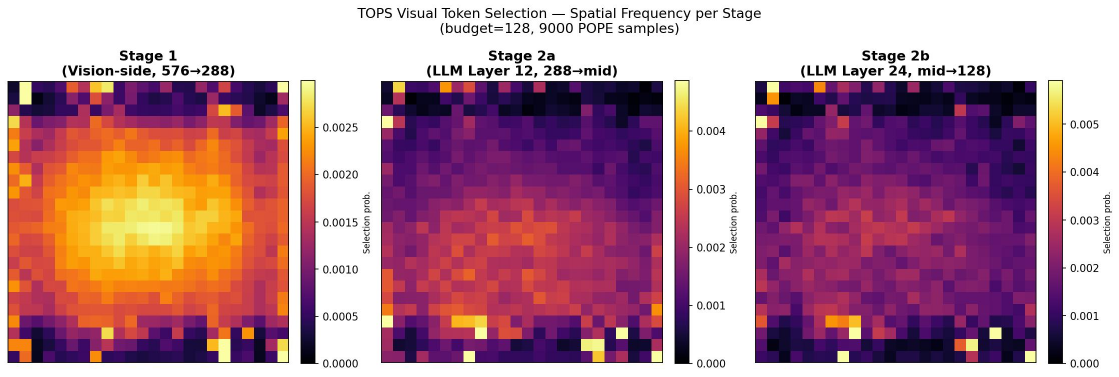}
\caption{Per-token selection probability of TOPS across three pruning stages (budget$=128$, $9000$ POPE samples, LLaVA-1.5-7B). All stages maintain low Gini ($\leq 0.290$) and high normalized entropy ($\geq 0.975$), confirming spatially uniform token selection.}
\label{fig:tops_stages_heatmap}

\vspace{-2mm}
\end{figure*}

\section{Visualization of TOPS}
\label{sec:appendix_visualization}

Figure~\ref{fig:vis_vanilla_comparison} compares TOPS against the Vanilla (unpruned) model. For each example, we show the original image, the token selection heatmap, and the generated answer. Under aggressive compression, TOPS focuses on task-relevant regions---text, key objects, and fine-grained details---while discarding redundant background.

\begin{figure*}[p]
    \centering
    \includegraphics[width=\linewidth]{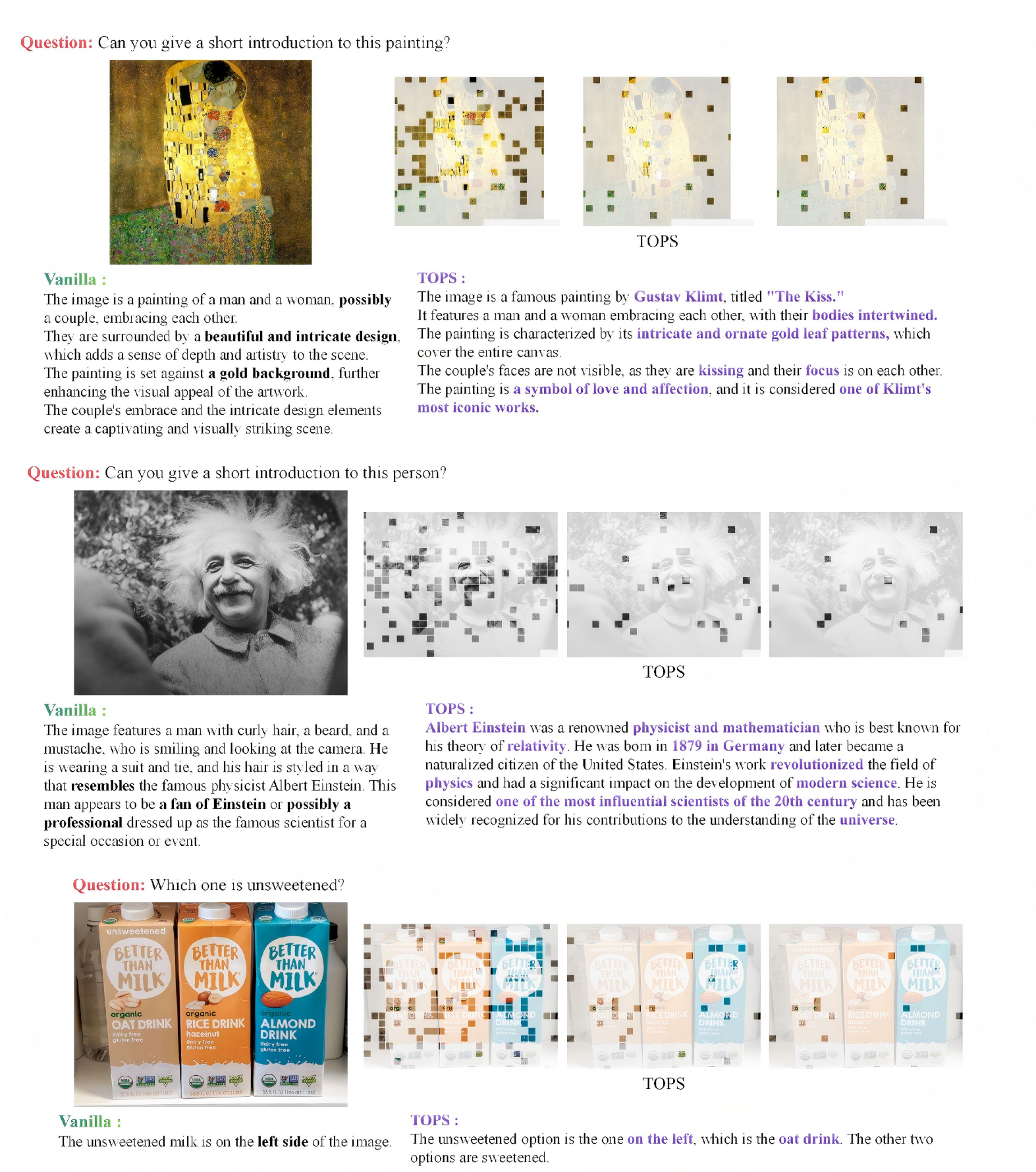}
    \caption{Qualitative comparison of visual token selections between the Vanilla model (no pruning) and TOPS across multiple examples. Despite using far fewer tokens, TOPS selects task-relevant regions and produces correct answers.}
    \label{fig:vis_vanilla_comparison}
\end{figure*}

Figure~\ref{fig:vis_summary_comparison} extends the comparison to multiple baselines (FastV, DivPrune, SCOPE) across diverse questions. Green text denotes correct answers; red denotes incorrect ones.

\begin{figure*}[p]
    \centering
    \includegraphics[width=\linewidth]{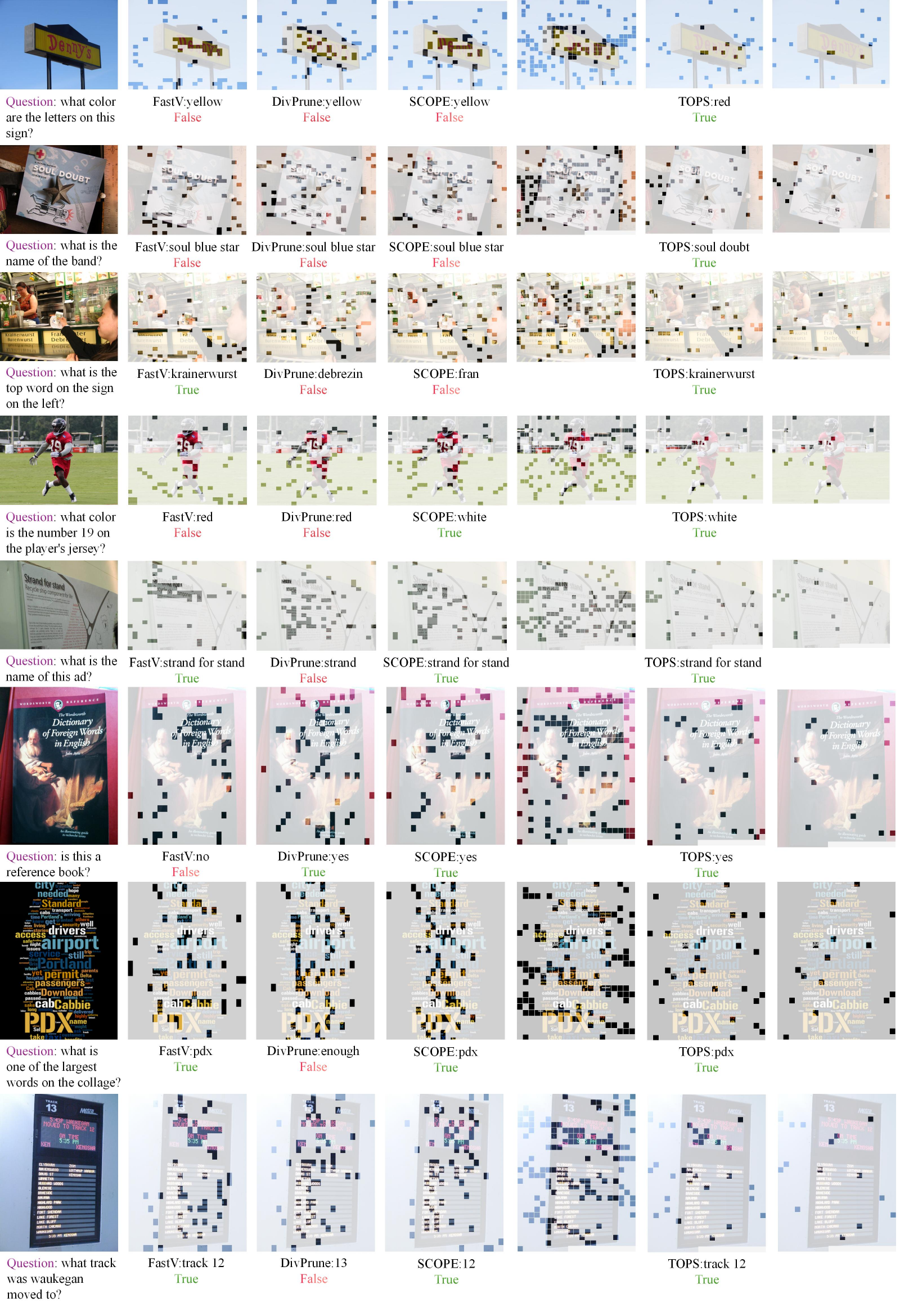}
    \caption{Comprehensive qualitative comparison of visual token selections by FastV, DivPrune, SCOPE, and TOPS across diverse real-world questions. Green text indicates a correct answer; red indicates an incorrect answer.}
    \label{fig:vis_summary_comparison}
\end{figure*}

\section{Per-Benchmark Radar Visualization}
\label{sec:appendix_radar}

To provide a more intuitive view of per-benchmark performance across all compression ratios and model variants, we present radar charts covering LLaVA-1.5 (7B and 13B), LLaVA-NeXT (7B and 13B), Qwen2.5-VL-7B and InternVL3-8B. Each axis corresponds to one benchmark; the outer boundary on each axis is set by the highest-scoring method. \textbf{TOPS} (red) consistently covers the largest area across all settings.

\begin{figure*}[h]
    \centering
    \begin{subfigure}[b]{0.3\linewidth}
        \centering
        \includegraphics[width=\textwidth]{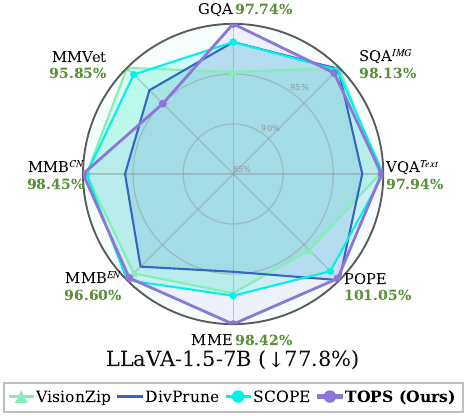}
        \caption{\textbf{LLaVA-1.5-7B} — 128 tokens ($\downarrow$77.8\%)}
        \label{fig:radar_15_7b_128}
    \end{subfigure}
    \hfill
    \begin{subfigure}[b]{0.3\linewidth}
        \centering
        \includegraphics[width=\textwidth]{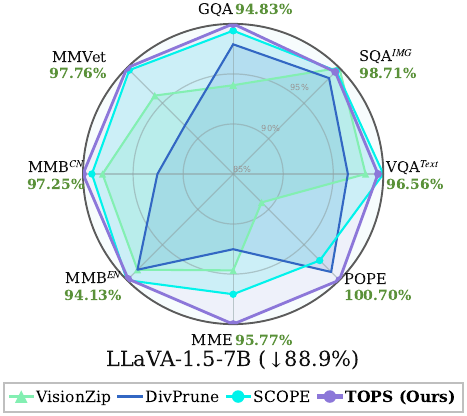}
        \caption{\textbf{LLaVA-1.5-7B} — 64 tokens ($\downarrow$88.9\%)}
        \label{fig:radar_15_7b_64}
    \end{subfigure}
    \hfill
    \begin{subfigure}[b]{0.3\linewidth}
        \centering
        \includegraphics[width=\textwidth]{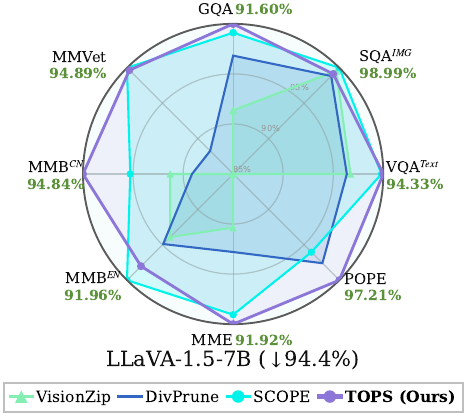}
        \caption{\textbf{LLaVA-1.5-7B} — 32 tokens ($\downarrow$94.4\%)}
        \label{fig:radar_15_7b_32}
    \end{subfigure}
    \caption{Radar charts for LLaVA-1.5-7B at three compression levels.}
    \label{fig:radar_15_7b_all}
\end{figure*}

\begin{figure*}[h]
    \centering
    \begin{subfigure}[b]{0.3\linewidth}
        \centering
        \includegraphics[width=\textwidth]{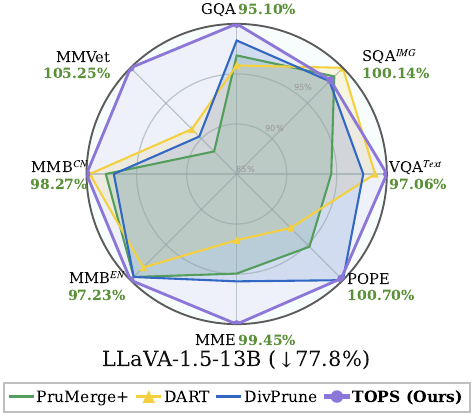}
        \caption{\textbf{LLaVA-1.5-13B} — 128 tokens}
        \label{fig:radar_15_13b_128}
    \end{subfigure}
    \hfill
    \begin{subfigure}[b]{0.3\linewidth}
        \centering
        \includegraphics[width=\textwidth]{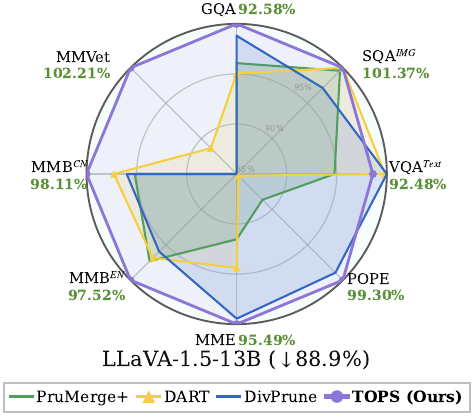}
        \caption{\textbf{LLaVA-1.5-13B} — 64 tokens}
        \label{fig:radar_15_13b_64}
    \end{subfigure}
    \hfill
    \begin{subfigure}[b]{0.3\linewidth}
        \centering
        \includegraphics[width=\textwidth]{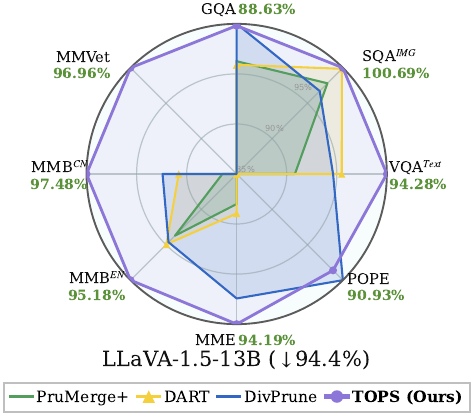}
        \caption{\textbf{LLaVA-1.5-13B} — 32 tokens}
        \label{fig:radar_15_13b_32}
    \end{subfigure}
    \caption{Radar charts for LLaVA-1.5-13B at three compression levels.}
    \label{fig:radar_15_13b_all}
\end{figure*}

\begin{figure*}[h]
    \centering
    \begin{subfigure}[b]{0.3\linewidth}
        \centering
        \includegraphics[width=\textwidth]{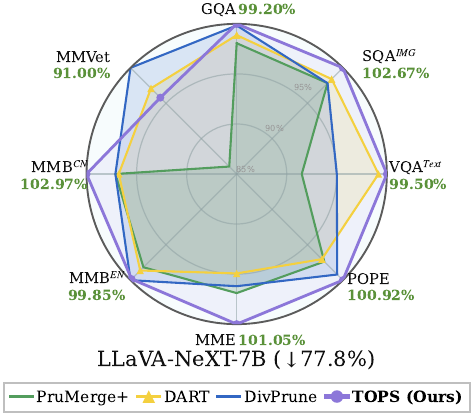}
        \caption{\textbf{LLaVA-NeXT-7B} — 640 tokens}
        \label{fig:radar_next_7b_640}
    \end{subfigure}
    \hfill
    \begin{subfigure}[b]{0.3\linewidth}
        \centering
        \includegraphics[width=\textwidth]{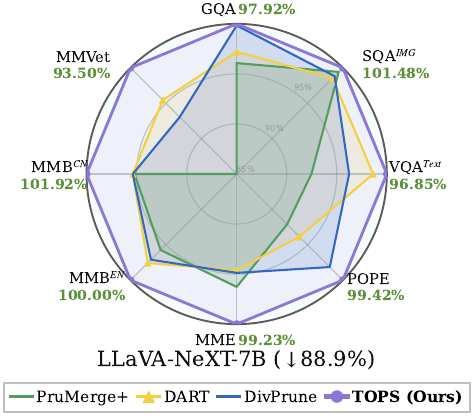}
        \caption{\textbf{LLaVA-NeXT-7B} — 320 tokens}
        \label{fig:radar_next_7b_320}
    \end{subfigure}
    \hfill
    \begin{subfigure}[b]{0.3\linewidth}
        \centering
        \includegraphics[width=\textwidth]{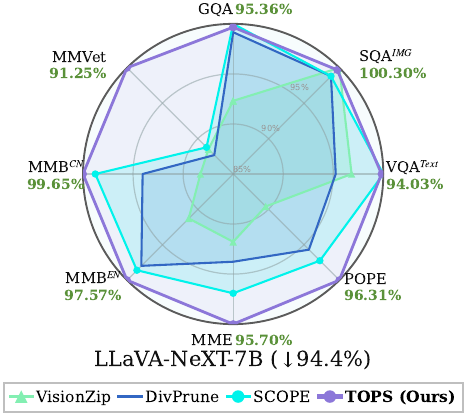}
        \caption{\textbf{LLaVA-NeXT-7B} — 160 tokens}
        \label{fig:radar_next_7b_160}
    \end{subfigure}
    \caption{Radar charts for LLaVA-NeXT-7B at three compression levels.}
    \label{fig:radar_next_7b_all}
\end{figure*}

\begin{figure*}[h]
    \centering
    \begin{subfigure}[b]{0.3\linewidth}
        \centering
        \includegraphics[width=\textwidth]{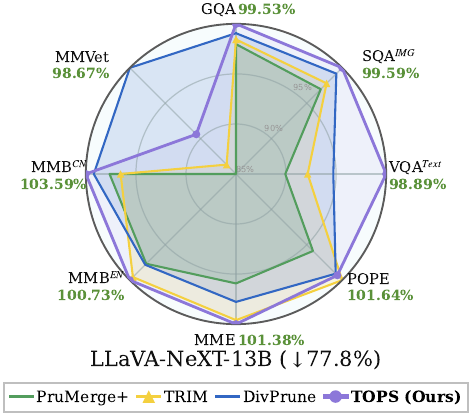}
        \caption{\textbf{LLaVA-NeXT-13B} — 640 tokens}
        \label{fig:radar_next_13b_640}
    \end{subfigure}
    \hfill
    \begin{subfigure}[b]{0.3\linewidth}
        \centering
        \includegraphics[width=\textwidth]{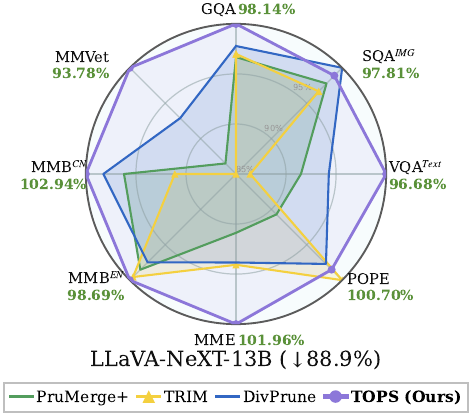}
        \caption{\textbf{LLaVA-NeXT-13B} — 320 tokens}
        \label{fig:radar_next_13b_320}
    \end{subfigure}
    \hfill
    \begin{subfigure}[b]{0.3\linewidth}
        \centering
        \includegraphics[width=\textwidth]{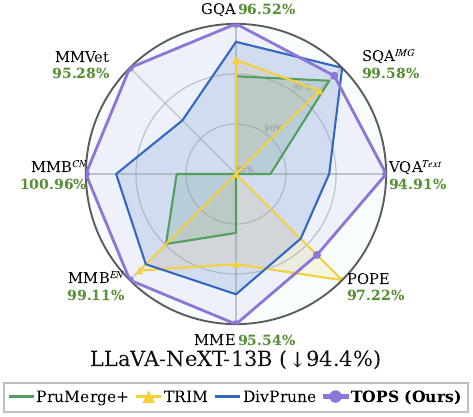}
        \caption{\textbf{LLaVA-NeXT-13B} — 160 tokens}
        \label{fig:radar_next_13b_160}
    \end{subfigure}
    \caption{Radar charts for LLaVA-NeXT-13B at three compression levels.}
    \label{fig:radar_next_13b_all}
\end{figure*}

\begin{figure*}[h]
    \centering

    \begin{subfigure}[b]{0.3\linewidth}
        \centering
        \includegraphics[width=\linewidth]{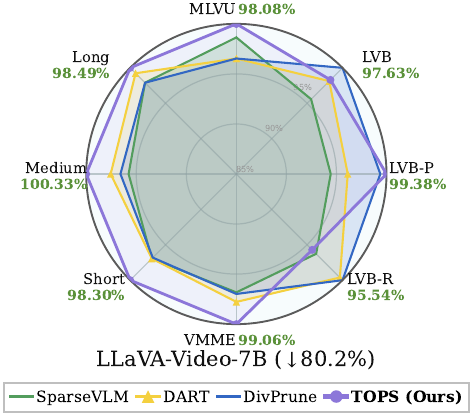}
        \caption{\textbf{LLaVA-Video-7B} — 64 tok/frame}
        \label{fig:radar_video_64}
    \end{subfigure}
    \hfill
    \begin{subfigure}[b]{0.3\linewidth}
        \centering
        \includegraphics[width=\linewidth]{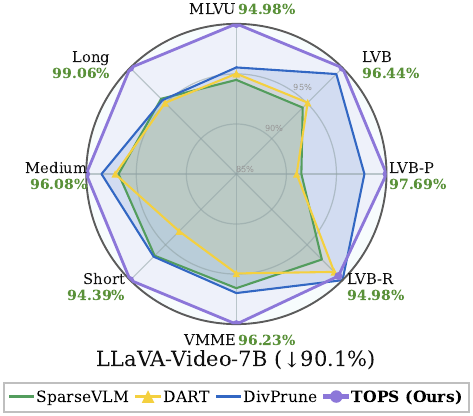}
        \caption{\textbf{LLaVA-Video-7B} — 32 tok/frame}
        \label{fig:radar_video_32}
    \end{subfigure}
    \hfill
    \begin{subfigure}[b]{0.3\linewidth}
        \centering
        \includegraphics[width=\linewidth]{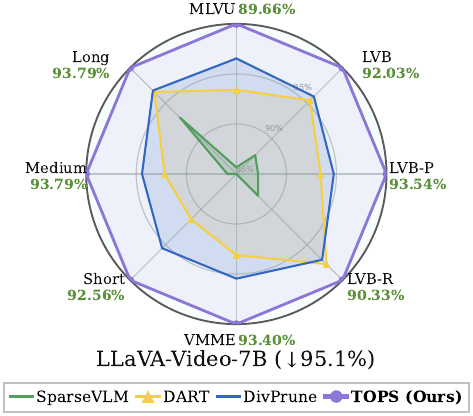}
        \caption{\textbf{LLaVA-Video-7B} — 16 tok/frame}
        \label{fig:radar_video_16}
    \end{subfigure}

    \caption{Radar charts for LLaVA-Video-7B at three compression levels.}
    \label{fig:radar_video_all}

    \vspace{4pt}

    \begin{subfigure}[b]{0.34\linewidth}
        \centering
        \includegraphics[width=\linewidth]{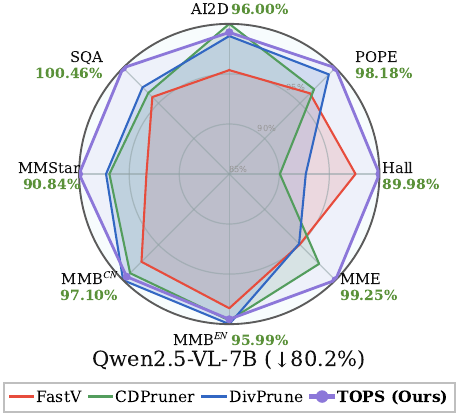}
        \caption{\textbf{Qwen2.5-VL-7B} — 256 tokens}
        \label{fig:radar_qwen_256}
    \end{subfigure}
    \hspace{0.06\linewidth}
    \begin{subfigure}[b]{0.34\linewidth}
        \centering
        \includegraphics[width=\linewidth]{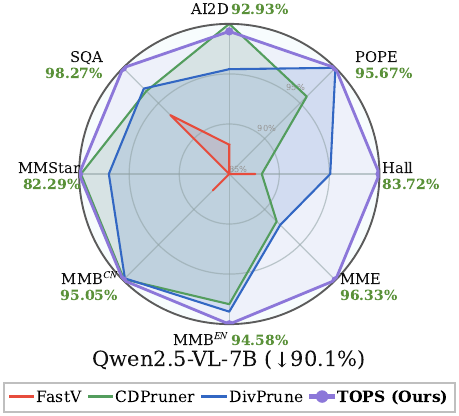}
        \caption{\textbf{Qwen2.5-VL-7B} — 128 tokens}
        \label{fig:radar_qwen_128}
    \end{subfigure}

    \vspace{2pt}

    \begin{subfigure}[b]{0.34\linewidth}
        \centering
        \includegraphics[width=\linewidth]{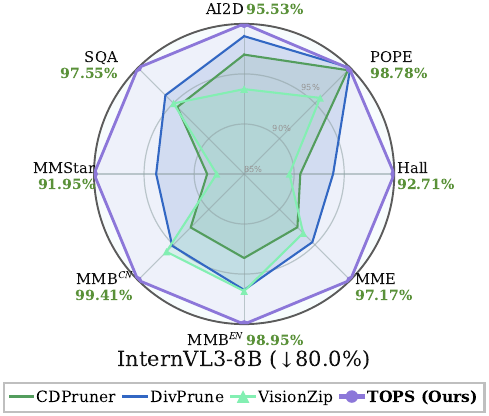}
        \caption{\textbf{InternVL3-8B} — 256 tokens}
        \label{fig:radar_internvl3_256}
    \end{subfigure}
    \hspace{0.06\linewidth}
    \begin{subfigure}[b]{0.34\linewidth}
        \centering
        \includegraphics[width=\linewidth]{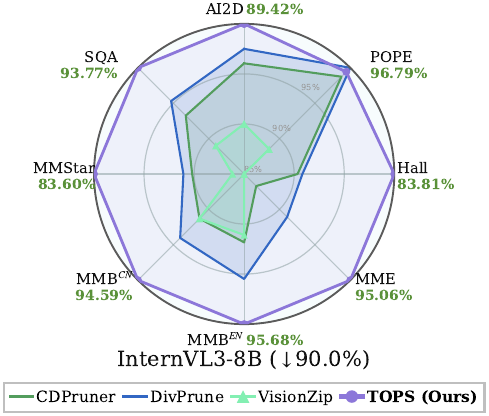}
        \caption{\textbf{InternVL3-8B} — 128 tokens}
        \label{fig:radar_internvl3_128}
    \end{subfigure}

    \caption{Radar charts for Qwen2.5-VL-7B and InternVL3-8B.}
    \label{fig:radar_qwen_internvl_all}

    \vspace{-4mm}
\end{figure*}

\section{Assets, Licenses, and Intended Use}
\label{sec:asset_license}

Our experiments use publicly released models, codebases, and benchmarks solely for non-commercial academic research and evaluation. We build on official open-source codebases and toolkits, including LLaVA, LLaVA-NeXT, \texttt{lmms-eval}, and \texttt{VLMEvalKit}, and follow the licenses and usage terms of the corresponding model providers for LLaVA-1.5, LLaVA-NeXT, LLaVA-Video, InternVL3-8B, and Qwen2.5-VL-7B-Instruct. We evaluate on public multimodal benchmarks, some of which impose non-commercial or academic-only restrictions, such as ScienceQA, MLVU, LongVideoBench, MM-Vet, and Video-MME. We do not redistribute third-party model weights, datasets, annotations, or videos; users should obtain these assets from their official sources and comply with their original licenses. We do not collect new human-subject data. Since our experiments use publicly released benchmarks, we rely on their original curation procedures and manually inspect examples used in qualitative visualizations to avoid displaying personally identifying or offensive content.

\section{Broader Impact}
\label{sec:broader_impact}

This work presents TOPS, a training-free visual token pruning method for efficient MLLM inference. TOPS lowers energy consumption and hardware requirements of deploying MLLMs, democratizing access to capable vision-language models on resource-constrained devices and reducing carbon emissions per query. As a training-free, architecture-agnostic method, TOPS does not introduce new biases through retraining. Users should be aware that token pruning may degrade accuracy on inputs requiring fine-grained spatial reasoning under aggressive compression, and should validate pruning configurations on their target task before deployment in safety-critical settings.

\end{document}